
\documentclass[letterpaper, 10 pt, conference]{ieeeconf}  

\IEEEoverridecommandlockouts                              

\overrideIEEEmargins                                      




\usepackage{amsmath}
\usepackage{amssymb}
\usepackage{mathtools}
\usepackage{enumerate}

\usepackage{afterpage}

\newtheorem{theorem}{Theorem}[section]

\newtheorem{Assumption}{Assumption}[section]

\newtheorem{lemma}[theorem]{Lemma}

\usepackage{subcaption}
\usepackage{float}
\usepackage{xcolor}
\usepackage{bm}
\usepackage{hyperref}

\allowdisplaybreaks

\setlength{\abovedisplayskip}{4pt}
\setlength{\belowdisplayskip}{4pt}
\setlength{\abovedisplayshortskip}{4pt}
\setlength{\belowdisplayshortskip}{4pt}

\newcommand{\norm}[1]{\left\lVert#1\right\rVert}
\newcommand{\abs}[1]{\left|#1\right|}

\newcommand{\e}{e}

\newcommand{\Mw}{M_{w}}
\newcommand{\hMw}{\widehat{\mathbf{M}}_{w}}

\newcommand{\htMw}{\widehat{\widetilde{\mathbf{M}}}_w}
\newcommand{\tMw}{\widetilde{M}_w}

\newcommand{\bu}{\mathbf{u}}
\newcommand{\bq}{\mathbf{q}}
\newcommand{\bx}{\mathbf{x}}
\newcommand{\bX}{\mathbf{X}}
\newcommand{\by}{\mathbf{y}}
\newcommand{\bz}{\mathbf{z}}
\newcommand{\bw}{\mathbf{w}}
\newcommand{\bv}{\mathbf{v}}
\newcommand{\bchi}{\bm{\chi}}
\newcommand{\bxi}{\bm{\xi}}
\newcommand{\bXi}{\bm{\Xi}}
\newcommand{\bHhat}{\mathbf{\widehat{H}}}
\newcommand{\bAhat}{\mathbf{\widehat{A}}}
\newcommand{\bBhat}{\mathbf{\widehat{B}}}
\newcommand{\bChat}{\mathbf{\widehat{C}}}

\newcommand{\sat}{\mathrm{sat}}

\DeclarePairedDelimiter\set\{\}

\title{\LARGE \bf
A finite-sample bound 
for identifying
partially observed linear switched systems from a single trajectory}

\author{Dániel Rácz$^{1}$,
 Mihály Petreczky$^{2}$ and Bálint Daróczy$^{3}$
\thanks{Supported by the European Union project RRF-2.3.1-21-2022-00004 within
the framework of the A.I. National Laboratory Program.
}
\thanks{$^{1}$HUN-REN SZTAKI and ELTE, Budapest, Hungary
        {\tt\small racz.daniel@sztaki.hun-ren.hu}}%
\thanks{$^{2}$
Univ. Lille, CNRS, Centrale Lille, UMR 9189 CRIStAL, F-59000 Lille, France}
\thanks{$^{3}$HUN-REN SZTAKI, Budapest, Hungary}
}

\begin{document}

\maketitle
\thispagestyle{empty}
\pagestyle{empty}

\begin{abstract}
We derive a finite-sample probabilistic bound on the parameter estimation error of a system identification algorithm for Linear Switched Systems. The algorithm estimates Markov parameters from a single trajectory and applies a variant of the Ho-Kalman algorithm to recover the system matrices.
Our bound guarantees statistical consistency under the assumption that the true system exhibits quadratic stability. The proof leverages the theory of weakly dependent processes.
To the best of our knowledge, this is the first finite-sample bound for this algorithm in the single-trajectory setting.
\end{abstract}


\section{Introduction}
\label{sec:introduction}
In this paper we consider the problem of identifying 
a \emph{Linear Switched System (LSS)}
\begin{align}
\label{eq:1}
\Sigma &
\begin{cases}
    \bx(t+1) = A_{\bq(t)} \bx(t) + B_{\bq(t)} \bu(t) + \bw(t)\\
    \by(t) = C \bx(t) + D \bu(t)+\bv(t)
\end{cases}
\end{align}
For simplicity, we consider MISO systems, i.e. \textcolor{black}{the input} $\bu(t) \in \mathbb{R}^m$, \textcolor{black}{the output} $\by(t) \in \mathbb{R}$ \textcolor{black}{and the state} $\bx(t) \in \mathbb{R}^n$, where $n$ is the dimension of the state space, \textcolor{black}{$m$ is the dimension of the input}, \textcolor{black}{the switching signal} $\bq(t) \in \{1, ..., n_Q\} = Q$, which is a finite set, \textcolor{black}{and $\bw(t)$ and $\bv(t)$ are the noise vectors.}
We would like to estimate the matrices $(\{A_q, B_q\}_{q \in Q}, C, D)$ (up to similarity) from a sample $\mathcal{S}_N=\{(y_t, u_t, q_t)\}_{t=0}^N$ of
$(\by(t),\bu(t),\bq(t))$
and derive finite-sample bounds on the parameter estimation error.

\textbf{Contribution.} We present a finite-sample probabilistic
bound on the
error between the true parameters and those returned by
a particular subspace identification algorithm, described
in \cite{CoxLPVSS,Rouphael2024}. 
The latter algorithm relies on estimating the
Markov-parameters of the LSS \cite{Petreczky2015}, and applying Ho-Kalman algorithm to
the arising Hankel matrix.
The bound on the estimation error of the system matrices is $O(1/\sqrt{N})$, therefore it converges to $0$ as $N \rightarrow \infty$.
\textcolor{black}{We assume bounded, i.i.d. inputs,
noises, and switching processes, as well as the stability of the true system.}  These simplifying assumptions ensure persistence of excitation \cite{PETRECZKY2023101308},  making them a natural starting point for deriving the first finite-sample bound.
We consider scalar outputs for simplicity, but the results can be extended to the MIMO case in a straightforward manner, \textcolor{black}{see \cite[Appendix B]{racz2025finite}}.

Similarly to bounds of this type, the bound is polynomial in the underlying system's state-space dimension, and decreases with the Lyapunov exponent of the underlying system. 
Unlike much of the related work, we do not assume any lower bound on the number of  Markov parameters needed to be estimated for a given trajectory length. On the downside, we need the boundedness of inputs and noise. In contrast to most of the existing work, instead of concentration inequalities for sub-Gaussian random matrices, we use concentration inequalities for weakly dependent processes \cite{alquier2012}.


\textbf{Motivation.}
  Finding finite-sample bounds on the system matrices allows us to evaluate the performance of the estimated model in terms of their predictive power for any inputs and switching sequences, and
  their use for control design.

\textbf{Related work: system identification.}
Identification of switched and jump-Markov systems is an active research area, see e.g. \cite{LauerBook}. 
Most of the literature assumes that the switching signal is unobserved,
making the learning problem more challenging and leading to 
the absence of consistency results. 
In contrast, we assume that the switching signal is observed,
which is a more restrictive assumption, but still relevant for many application, and 
we provide a finite-sample bound which also proves consistency.
Furthermore, there are consistent subspace
    identification algorithms, e.g. \cite{PETRECZKY2023101308} for noiseless LSSs,
    or in \cite{Rouphael2024} for noisy LSS. Both algorithms assume that the switching signal is known. To our knowledge, there are no consistency results without some knowledge about the switching signal. 
    
    
    LSSs can be viewed as a subclass of linear parameter-varying (LPV)
    systems, if the switching signal is viewed as a discrete scheduling. There is a wealth
    of literature on system identification for LPV systems, including subspace methods \cite{CoxTothSubspace, Toth2010SpringerBook}, 
    but there are no finite-sample bounds for the 
    single time series setup.

\textbf{Related work: finite-sample bounds.}
\textcolor{black}{Probably Approximatelly Correct (PAC, see \cite{shalev2014understanding} for a general overview)} bounds for sub-classes of
dynamical systems exist in autoregressive \cite{alquier2012, MASSUCCI202155}
and state-space form \cite{eringisRenyi,eringis2023pacbayes}.
However, the cited papers relate the true loss with the empirical loss,
and except \cite{eringisRenyi} which studies only linear systems,
the cited papers do not address the parameter estimation error. 

%

Recent works on finite-sample bounds on learning algorithms for dynamical systems \cite{FiniteSampleOverview} focus on autoregressive models or full-state observation models, including switched models \cite{MASSUCCI202155, sayedana2024strong, shi2022finite, ziemann2022single}. A few papers propose finite-sample bounds for identifying linear dynamical systems \cite{simchowitz2018learning} by estimating Markov parameters and applying Ho-Kalman \cite{oymak2021revisiting}.
\cite{sattar2025finitesampleidentificationpartially} proposed a finite-sample bound for estimating Markov parameters for bilinear systems. However,
\textcolor{black}{all these} results are not transferable to LSSs, due to differences between the system classes. 

In \cite{sarkar2019nonparametric} a similar problem was considered, but the training data consisted of several independently sampled time series. 
Another difference is that we consider a version of the Ho-Kalman 
algorithm with a basis selection. The latter has been used for
subspace identification \cite{CoxLPVSS,CoxTothSubspace}, and it
is computationally more efficient.

To sum it up, the main novelty of this paper is proposing a
finite-sample bound for estimating Linear Switched Systems with \textbf{(1)} \emph{partial observations} from a \textbf{(2)} \emph{single time series}, using the theory of \textbf{(3)} \emph{weakly-dependent} processes \cite{alquier2012}.
%
%
The paper is organized as follows. In Section~\ref{sec:preliminaries}, we define our problem. Next, we state our main result in Section~\ref{sec:main}, conclude our findings in Section~\ref{sec:conc}  
followed by the proofs of the Lemmas in Section~\ref{sec:proofs}. 

\section{Problem setup}
\label{sec:preliminaries}

\textbf{Notation.}
For any vector $v \in \mathbb{R}^n$ we denote the $i$-th component of $v$ by $v[i]$, while $v[k:m]$ is the component of $v$ between the indices $k$ and $m$.
A word $w = w_1 \ldots w_k$ refers to the tuple $(w_1, \ldots w_k)$. The length of the word is denoted by $|w|$. For any words $v$ and $w$, $vw$ denotes their concatenation. \textcolor{black}{The empty word is denoted by $\epsilon$ and $Q^*$ denotes the Kleene star of $Q$.} We use \textbf{boldface} characters for probabilistic quantities. 
For any matrix $A$, $\sigma_n(A)$ denotes its smallest singular value, while $\lambda_1(A)$ and $\lambda_n(A)$ denote the eigenvalues with the largest and smallest absolute values, respectively.
\textcolor{black}{$I_n \in \mathbb{R}^{n \times n}$ denotes the identity matrix, $0$ denotes the squared zero matrix and $|$ denotes the concatenation of matrices.}
We denote by $\norm{\cdot}_p$ the $p$-norm $p=1,2,\infty$ on the Euclidean space and \textcolor{black}{$\norm{\cdot}_F$ denotes the Frobenius norm}.
All the random variables are understood over the probability space $(\Omega,\mathcal{F},\mathbb{P})$, $\mathbb{E}$ denotes the expectation and
all the stochastic processes are over the discrete time axis $\mathbb{Z}$. 




\textbf{Preliminaries on switched systems}.
We make the following assumptions on the LSS parameters \textcolor{black}{defined in \eqref{eq:1}}. 
\begin{Assumption}
\label{ass:main}
We assume the following on the processes and matrices involved in the
definition of \eqref{eq:1}:
\begin{enumerate}[a)]
\item  ${\bq(t)}$ is i.i.d., $\mathbb{P}({\bq(t)} = q) = p_q > 0$,
\item 
     \textcolor{black}{$\mathbf{r}(t)=[\bu^T(t), \bw^T(t), \bv^T(t)]^T$} is a zero mean i.i.d. process with \textcolor{black}{$\Sigma_u=\mathbb{E}[{\bu(t) \bu(t)^T}]$ being the covariance matrix of $\bu$},
     and for 
     $t \in \mathbb{Z}$, 
     $\norm{\mathbf{r}(t)}_{\infty} \le K_u$,  and
     $\mathbf{r}(t)$ and $\bx(t)$ are independent, 
     and also $\{\bw^T(t),\bv^T(t)\}$  and  $\bu(t)$ are independent.
\item
${\set{\mathbf{r}(s)}_{s \in \mathbb{Z}}}$ and ${\set{\bq(s)}_{s \in \mathbb{Z}}}$ are independent,
\item the noiseless LSS,
  \( x_{t+1}=A_{q_t}x_t+ B_{q_t}u_t, ~ y_t=Cx_t+Du_t \)
  has minimal dimension in the sense of \cite{Petreczky2015}, i.e. it is span-reachable from zero and observable, 
\item there exists a positive definite matrix
$P \succ 0$ and $\textcolor{black}{\gamma \in (0,1)}$
such that $\forall q \in Q$,
$A_q^TPA_q \prec \textcolor{black}{\frac{\gamma^2}{n^2_{Q}}} P$.
\end{enumerate}
\end{Assumption}

Assumption a) is somewhat restrictive, but reasonable when switching is treated as an external input, as it is common in LSS applications.
Assumption b) requires the inputs and the noise to be i.i.d., bounded, and that the noise is independent of state/input. Independence is standard; boundedness is typical in robust control, though it is restrictive. I.i.d. inputs are common in finite-sample bounds, but restrictive for identification.
Assumption c) assumes the switching to be independent of inputs/noise, realistic for external, but excludes state-dependent switching.
Assumption d) ensures well-posed parameter estimation: $A_q,B_q,C$ are unique up to a coordinate change.
Assumption e) implies quadratic stability, ensuring mean-square stability \cite{CostaBook}, similar to \cite{sarkar2019nonparametric}.

Assumption e) implies that i.i.d. $\bq$ and $\bu$ act as the persistence of excitation: once system matrices are identified (up to a basis change), the model is input-output equivalent to the true system for any input or switching sequence, similar to the standard principles in system identification.

Next, we recall from \cite{CoxLPVSS,Petreczky2015,sarkar2019nonparametric} the definition of Markov parameters and the covariance formula for them. 


For any word $w \in Q^{*}$
define $A_w$ as \textcolor{black}{the matrix product} $A_w=A_{w_k} \cdot \ldots \cdot A_{w_1}$  if
$w = w_1 \ldots w_k$, where $w_i \in Q$, $i=1,\ldots,k$, and let
$A_{\epsilon} = I_n$.
Define the \emph{Markov parameters} $\{M_v\}_{v \in Q^{*}}$ of $\Sigma$ for any $i \in Q$ and $w \in Q^{*}$ as
\begin{align}
\label{eq:M}
M_{iw} = CA_wB_i,  ~ M_{\epsilon}=D.
\end{align}
It follows from realization theory of LSSs \cite{Petreczky2015} that the Markov parameters $\{M_w\}_{w \in Q^{*}}$ determine
the matrices $\{A_q,B_q\}_{q \in Q}, C$ up to a change of basis, due to minimality. 
In fact, there is a Ho-Kalman-like realization algorithm for computing the system matrices from the Markov parameters.
In turn, the Markov parameters can be computed as covariances of the input and output as follows. 
For any vector valued, discrete-time stochastic process
$\mathbf{h}$ define
\begin{align*}
&\bz_w^{\mathbf{h}}(t) = \mathbf{h}(\textcolor{black}{t-k})\bchi_w(t-1) \\
&\bchi_{c}(t) := \begin{cases}
1 &  \text{if } \bq(t) = c \in Q \\
0 & \text{otherwise}
\end{cases} 
\end{align*}
and \textcolor{black}{$\bchi_w(t) = \bchi_{w_1}(t - k+1 )\cdot \ldots \cdot \bchi_{w_k}(t)$} 
for $Q^* \ni w=$\\$= w_1 \ldots w_k$, $k > 1$, and set $\bchi_{\epsilon}(t)=1$, 
\textcolor{black}{$\bz_{\epsilon}^{\mathbf{h}}(t)=\mathbf{h}(t)$}.

\begin{lemma}
\label{lem:stab:stat}
Let $B'_q=[B_q \mid I_n \mid 0 ]$,  and 
for all $v \in Q^{*}$,
$M'_{qv}=CA_vB'_q$ and 
\textcolor{black}{$M'_{\epsilon}=[ D, 0 ] \in \mathbb{R}^{1 \times m+1}$}, and 
\\$m' = m + n + 1$.
Under Assumption \ref{ass:main} the followings hold:
\begin{enumerate}[a)]
\item $\exists~ \gamma \in (0,1)$, $K_M > 0$ such that
       $\norm{M'_w}_2 < K_M \frac{\gamma^{|w|}}{n_Q^{|w|}}$.
\item
There exist unique stationary state
and output trajectories $\bx$ and $\by$ which satisfy \eqref{eq:1} and
\begin{align*}
& \bx(t) = \sum_{(v, q) \in Q^{*}\times Q}  A_vB'_q \bz_{v}^{\mathbf{r}}(t), ~ \by(t) = \sum_{v \in Q^{*}}  M'_v \bz_{v}^{\mathbf{r}}(t)
\end{align*}
where the infinite sum converges in
the mean square sense,
and for all $w = w_1 \ldots w_k \in Q^{*}$
\begin{align}
\label{eq:Mw}
M_{w} = \mathbb{E}\left[{{\by(t)} {\bz}^{{\bu}}_{w} {(t)} }\right]^T\Sigma_u^{-1}p_w^{-1}
\end{align}
where  $p_{\epsilon}=1$ and $p_{w}=\prod_{i=1}^{k} p_{w_i}$
for $w_i \in Q, k > 0$.
\item $\by(t)$ is bounded, i.e. 
$|\by(t)| < K_y:=\frac{\textcolor{black}{\sqrt{m'}}K_M K_u}{1-\gamma}$.
\end{enumerate}
\end{lemma}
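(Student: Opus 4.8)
The single source of decay is Assumption~\ref{ass:main}(e), and the single organizing principle throughout is the mutual exclusivity of the switching indicators. First I would convert the one-step contraction $A_q^TPA_q \prec \frac{\gamma^2}{n_Q^2}P$ into a bound on the matrix products $A_w$. Iterating the inequality along a word $w=w_1\ldots w_k$ gives $A_w^TPA_w \prec \frac{\gamma^{2k}}{n_Q^{2k}}P$, and sandwiching $P$ between $\lambda_n(P)I_n$ and $\lambda_1(P)I_n$ yields $\norm{A_w}_2 \le \kappa\,(\gamma/n_Q)^{\abs{w}}$ with $\kappa:=\sqrt{\lambda_1(P)/\lambda_n(P)}$. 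Writing $w=w_1v$ with $\abs{v}=\abs{w}-1$, submultiplicativity gives $\norm{M'_w}_2=\norm{CA_vB'_{w_1}}_2\le \norm{C}_2\,(\max_q\norm{B'_q}_2)\,\norm{A_v}_2$, so the claim holds for any $K_M$ exceeding both $\kappa\,\tfrac{n_Q}{\gamma}\norm{C}_2\max_q\norm{B'_q}_2$ and $\norm{M'_\epsilon}_2$. This part is routine.

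\textbf{Part (b), series and uniqueness.} Next I would unroll the stacked recursion $\bx(t+1)=A_{\bq(t)}\bx(t)+B'_{\bq(t)}\mathbf{r}(t)$, using $B'_q\mathbf{r}=B_q\bu+\bw$. The lag-$j$ contribution $A_{\bq(t-1)}\cdots A_{\bq(t-j)}B'_{\bq(t-j-1)}\mathbf{r}(t-j-1)$ is, by the mutual exclusivity of the length-$(j+1)$ indicators $\bchi_w(t-1)$, the single realized term of the word-indexed sum $\sum_{\abs{v}=j,\,q}A_vB'_q\,\bz_{qv}^{\mathbf{r}}(t)$, and its norm is at most $\kappa\,(\gamma/n_Q)^{j}(\max_q\norm{B'_q}_2)\sqrt{m'}K_u$ by part (a) and $\norm{\mathbf{r}}\le\sqrt{m'}K_u$. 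Since $\gamma<1$ these bounds are geometrically summable, giving absolute and hence mean-square convergence; the same computation on $C\bx(t)+D\bu(t)+\bv(t)$ produces the $\by$ series. Stationarity holds because the limit is a fixed time-invariant measurable function of the jointly stationary, independent i.i.d.\ pair $(\mathbf{r},\bq)$, and satisfying~\eqref{eq:1} is checked by shifting the series and matching the lowest-lag term. For uniqueness, any two stationary $L^2$ solutions differ by $\bd$ with $\bd(t+1)=A_{\bq(t)}\bd(t)$, whence $\norm{\bd(t)}_2^2\le \kappa^2(\gamma/n_Q)^{2j}\norm{\bd(t-j)}_2^2$ pointwise; taking expectations and invoking stationarity sends the right-hand side to $0$, forcing $\bd\equiv 0$.

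\textbf{Part (b), covariance formula.} Then I would insert the $\by$ series into $\mathbb{E}[\by(t)\bz_w^{\bu}(t)]$ and split over words $u$. Independence of $\mathbf{r}$ and $\bq$ factors each summand as $\mathbb{E}[\bchi_u(t-1)\bchi_w(t-1)]\,\mathbb{E}[\bu(t-\abs{w})\mathbf{r}(t-\abs{u})^T](M'_u)^T$. Because $\mathbf{r}$ is i.i.d.\ zero-mean, the factor $\mathbb{E}[\bu(t-\abs{w})\mathbf{r}(t-\abs{u})^T]$ vanishes unless $\abs{u}=\abs{w}$, and among words of equal length the exclusive indicators leave only $u=w$, contributing $\mathbb{E}[\bchi_w(t-1)]=p_w$ by i.i.d.\ switching. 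Finally $\mathbb{E}[\bu(t-\abs{w})\mathbf{r}(t-\abs{w})^T]=[\Sigma_u\mid 0\mid 0]$ by $\bu\perp\{\bw,\bv\}$, and the first $m$ columns of $M'_w=CA_vB'_{w_1}$ equal $CA_vB_{w_1}=M_w$; assembling gives $\mathbb{E}[\by(t)\bz_w^{\bu}(t)]=p_w\,\Sigma_u M_w^T$, which transposes and rescales to exactly~\eqref{eq:Mw}.

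\textbf{Part (c) and main obstacle.} Part (c) is then immediate: by exclusivity exactly one word $w$ of each length $k$ has $\bchi_w(t-1)=1$, so $\abs{\by(t)}\le\sum_{k\ge 0}\norm{M'_{w}}_2\norm{\mathbf{r}(t-k)}<\sum_{k\ge 0}K_M(\gamma/n_Q)^k\sqrt{m'}K_u\le \frac{\sqrt{m'}K_MK_u}{1-\gamma}=K_y$, using part (a) and $n_Q\ge 1$. The conceptual enabler throughout is the mutual exclusivity of the length-$k$ switching indicators, which collapses every word-indexed sum to a single realized term and turns the analysis into a deterministic geometric estimate. Given that, I expect the main obstacle to be the covariance bookkeeping in Part (b): carefully matching time indices so that only $u=w$ survives, cleanly separating the $\bq$- and $\mathbf{r}$-dependence, and identifying $M_w$ as the leading block of $M'_w$.
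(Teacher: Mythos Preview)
Your proof is correct and follows the paper closely for part~(a) and for the covariance identity~\eqref{eq:Mw}. The substantive difference is in the existence, uniqueness, and convergence portion of part~(b). The paper does not unroll the recursion by hand; instead it observes that Assumption~\ref{ass:main}(e) implies $\sum_q p_q A_q^TPA_q \prec \gamma^2 P$, invokes \cite[Theorem~3.9]{CostaBook} to conclude that $\sum_q p_q A_q\otimes A_q$ is Schur, and then appeals to the abstract results \cite[Lemma~1, Example~2]{PetreczkyBilinear} to obtain the unique stationary mean-square limit, with stationarity argued via convergence in distribution of the partial sums. Your self-contained route exploits the stronger \emph{pointwise} contraction $\|A_w\|_2\le\kappa(\gamma/n_Q)^{|w|}$ directly, which gives almost-sure absolute convergence (hence mean-square convergence by dominated convergence with a deterministic geometric majorant) and a clean uniqueness argument without external references. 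This is more elementary and in fact delivers a bit more than the paper claims. For part~(c) the paper sums over all $n_Q^k$ words of length $k$ and lets the $n_Q^k$ factor cancel against the $n_Q^{-k}$ in the Markov-parameter bound, whereas your mutual-exclusivity observation keeps only the single realized word per length and reaches the tighter intermediate bound $\tfrac{\sqrt{m'}K_MK_u}{1-\gamma/n_Q}$ before relaxing to $K_y$. Both routes are valid; yours is slightly sharper and more transparent.
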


\textcolor{black}{The proof can be found in \cite[Appendix A]{racz2025finite}. }
%


\textbf{Reduced basis Ho-Kalman algorithm}. 
An isomorphic copy of \(\Sigma\)  can be recovered from the Markov parameters using the reduced basis Ho-Kalman algorithm \cite{CoxLPVSS,PETRECZKY2023101308,petreczky2010spaces}, which, unlike naive extensions of Ho-Kalman to LSSs, avoids any exponential growth of the Hankel-matrix. 

Consider the sets \(\alpha = \{\eta_1, \dots, \eta_n\} \subseteq Q^*\) and
\\ \(\beta = \{(\mu_1, q_1, l_1), \dots, (\mu_n, q_n, l_n)\} \subset Q^* \times Q \times \{1, \dots, m\}\), where \(|\eta_i|, |\mu_i| \leq n-1\) for \(i = 1, \dots, n\). These are called \(n\)-row and \(n\)-column selections, respectively.
Intuitively, \textcolor{black}{they} represent a selection of rows and columns of the standard Hankel-matrix for LSSs \cite{Petreczky2015,sarkar2019nonparametric}.
Thus we define the Hankel-matrix
$H_{\alpha, \beta} \in \mathbb{R}^{n \times n}$
as
\begin{align*}
    (H_{\alpha, \beta})_{i, j} = M_{q_j \mu_j \eta_i}[l_j].
\end{align*}
\begin{lemma}[\cite{MertBastug:TAC}]
\label{lemma:hk:1}
There exists an $n$-row selection $\alpha$ and $n$-column selection $\beta$,
such that 
    $\text{rank}(H_{\alpha, \beta}) = n$.
\end{lemma}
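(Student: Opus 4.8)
The plan is to exhibit the Hankel matrix as a product of an observability-type and a reachability-type matrix, and then invoke minimality (Assumption~\ref{ass:main}~d)) to choose the selections $\alpha,\beta$ so that both factors are invertible. Recall that $M_{q_j\mu_j\eta_i} = CA_{\mu_j\eta_i}B_{q_j}$, and that with the convention $A_w = A_{w_k}\cdots A_{w_1}$ concatenation reverses the order of multiplication, so $A_{\mu_j\eta_i} = A_{\eta_i}A_{\mu_j}$. Hence, writing $e_{l_j}\in\mathbb{R}^m$ for the $l_j$-th standard basis vector,
\[(H_{\alpha,\beta})_{i,j} = \left(CA_{\eta_i}\right)\left(A_{\mu_j}B_{q_j}e_{l_j}\right).\]
Defining the row vectors $o_i := CA_{\eta_i}\in\mathbb{R}^{1\times n}$ and the column vectors $r_j := A_{\mu_j}B_{q_j}e_{l_j}\in\mathbb{R}^n$, this reads $H_{\alpha,\beta} = \mathcal{O}_\alpha \mathcal{R}_\beta$, where $\mathcal{O}_\alpha\in\mathbb{R}^{n\times n}$ stacks the $o_i$ as rows and $\mathcal{R}_\beta\in\mathbb{R}^{n\times n}$ stacks the $r_j$ as columns. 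Since the product of two invertible $n\times n$ matrices has rank $n$, it suffices to choose $\alpha$ and $\beta$ so that $\mathcal{O}_\alpha$ and $\mathcal{R}_\beta$ are both nonsingular.

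Next I would select $\alpha$ using observability. Consider the increasing chain of row spaces $W_k := \mathrm{span}\{CA_w : w\in Q^*,\ |w|\le k\}\subseteq\mathbb{R}^{1\times n}$. From $CA_w = (CA_{w'})A_{w_1}$ for $w=w_1w'$ one obtains the recursion $W_{k+1} = W_k + \sum_{q\in Q} W_k A_q$, and hence the stabilization property: if $W_{k+1}=W_k$ then $W_{k+\ell}=W_k$ for all $\ell\ge 0$. Because $\dim W_k$ is nondecreasing and bounded by $n$ while $\dim W_0\ge 1$, the chain stabilizes by index $n-1$, so $W_{n-1}=\bigcup_k W_k$, which equals $\mathbb{R}^{1\times n}$ by observability. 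Thus there exist $n$ words $\eta_1,\dots,\eta_n$ of length at most $n-1$ with $CA_{\eta_1},\dots,CA_{\eta_n}$ linearly independent; these define $\alpha$ and make $\mathcal{O}_\alpha$ invertible. Dually, span-reachability from zero (the same stabilization argument applied to the column spaces $R_k:=\mathrm{span}\{A_vB_qe_l : |v|\le k,\ q\in Q,\ l\in\{1,\dots,m\}\}$, using $A_v B_q e_l = A_{v_{k+1}}(A_{v'}B_qe_l)$) gives $R_{n-1}=\mathbb{R}^n$, so we can pick $n$ triples $(\mu_j,q_j,l_j)$ with $|\mu_j|\le n-1$ whose vectors $r_j$ are linearly independent; these define $\beta$ and make $\mathcal{R}_\beta$ invertible. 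Combining, $H_{\alpha,\beta}=\mathcal{O}_\alpha\mathcal{R}_\beta$ has rank $n$.

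The main obstacle is not the factorization, which is immediate, but the length bound $|\eta_i|,|\mu_i|\le n-1$ demanded by the definition of $n$-row and $n$-column selections: this requires the Cayley--Hamilton-type stabilization argument for switched systems to certify that the observability and reachability spaces are already spanned by words of length at most $n-1$. I would therefore state and prove that stabilization lemma carefully (it is the only nonroutine ingredient), and otherwise rely on the minimality Assumption~\ref{ass:main}~d) exactly as formulated in \cite{Petreczky2015}, from which the selections and the final rank conclusion follow directly.
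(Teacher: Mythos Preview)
Your argument is correct: the factorization $H_{\alpha,\beta}=\mathcal{O}_\alpha\mathcal{R}_\beta$ follows from the Markov-parameter definition and the word convention $A_{\mu_j\eta_i}=A_{\eta_i}A_{\mu_j}$, and the stabilization argument for the chains $W_k$ and $R_k$ is the standard Cayley--Hamilton-type device for LSS realization theory, giving the required length bound $\le n-1$ under minimality.

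The paper itself does not prove this lemma; it states it as a known result imported from the cited reference. Your proposal therefore supplies exactly the realization-theoretic argument one would find in that literature (factor the reduced Hankel matrix as observability-times-reachability, then use minimality plus stabilization of the filtered observability/reachability spaces to select $n$ independent rows and columns indexed by short words). There is nothing to compare against in the paper, and your proof stands on its own. One small remark: you implicitly use $\dim W_0\ge 1$ and $\dim R_0\ge 1$; this is indeed forced by minimality (otherwise $C=0$ or all $B_q=0$, contradicting observability/span-reachability for $n\ge 1$), but it is worth saying explicitly.
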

\textcolor{black}{We introduce} the following matrices 
$H_{\alpha, q, \beta} \in \mathbb{R}^{n \times n}$,
 $H_{\alpha, q} \in \mathbb{R}^{n \times m}$,
$H_{\beta} \in \mathbb{R}^{1 \times n}$, where 
\begin{align*}
& (H_{\alpha, q, \beta})_{i, j} = M_{q_j\mu_j q \eta_i}[l_j], ~ 
(H_{\alpha, q})_{i,j} = M_{\eta_i q }[l_j], \\
& (H_\beta)_{1, j} = M_{q_j \mu_j}[l_j].
\end{align*}
The reduced basis Ho-Kalman algorithm computes $(\{\bar{A}_q,\bar{B}_q\}_{q \in Q}, \bar{C},\bar{D})$
defined as 
$\bar{A}_q = H_{\alpha, \beta}^{-1} H_{\alpha, q, \beta}$\\
$\bar{B}_q= H_{\alpha, \beta}^{-1} H_{\alpha, q} ~  
\bar{C} = H_{\beta}$ and 
$\bar{D} = M_{\epsilon}$.

\begin{lemma} [\cite{petreczky2010spaces}]
\label{lemma:hk:2}
If $\text{rank}( H_{\alpha, \beta}) = n$, there exists $T \in \mathbb{R}^{n \times n}$ such that $\text{det}(T) \neq 0$ and for all $q \in Q$ we have
$\bar{A}_q = TA_q T^{-1}, \quad \bar{B}_q = TB_q, \quad \bar{C}= C_ T^{-1}$.
\end{lemma}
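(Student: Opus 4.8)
The plan is to show that every Hankel-type matrix built by the algorithm factors through a common pair of extended observability and reachability matrices, so that the conclusion follows by cancellation. Define $\mathcal{O}_\alpha \in \mathbb{R}^{n\times n}$ to be the matrix whose $i$-th row is $CA_{\eta_i}$, and $\mathcal{R}_\beta \in \mathbb{R}^{n\times n}$ to be the matrix whose $j$-th column is the $l_j$-th column of $A_{\mu_j}B_{q_j}$. The two facts I would rely on are the defining identity $M_{iw}=CA_wB_i$ and the reversal rule $A_{vw}=A_wA_v$, which is immediate from $A_w=A_{w_k}\cdots A_{w_1}$. Expanding the base entry, $M_{q_j\mu_j\eta_i}=CA_{\mu_j\eta_i}B_{q_j}=CA_{\eta_i}A_{\mu_j}B_{q_j}$, and taking its $l_j$-th coordinate gives $(H_{\alpha,\beta})_{i,j}=(CA_{\eta_i})\,(A_{\mu_j}B_{q_j})[l_j]$, i.e. $H_{\alpha,\beta}=\mathcal{O}_\alpha\mathcal{R}_\beta$.

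The identical expansion applied to the remaining matrices yields the three companion factorizations $H_{\alpha,q,\beta}=\mathcal{O}_\alpha A_q\mathcal{R}_\beta$ (the inserted letter $q$ produces an $A_q$ sandwiched between $A_{\eta_i}$ and $A_{\mu_j}$, since $A_{\mu_j q\eta_i}=A_{\eta_i}A_qA_{\mu_j}$), $H_{\alpha,q}=\mathcal{O}_\alpha B_q$ (reading $M_{q\eta_i}=CA_{\eta_i}B_q$ columnwise), and $H_\beta=C\mathcal{R}_\beta$. These four identities are the crux of the argument and reduce the statement to linear algebra.

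I would then invoke the rank hypothesis. Since $H_{\alpha,\beta}=\mathcal{O}_\alpha\mathcal{R}_\beta$ is a product of two $n\times n$ matrices of rank $n$, and $\mathrm{rank}(XY)\le\min\{\mathrm{rank}(X),\mathrm{rank}(Y)\}$, both $\mathcal{O}_\alpha$ and $\mathcal{R}_\beta$ must themselves have rank $n$, hence be invertible. Setting $T:=\mathcal{R}_\beta^{-1}$, so $\det(T)\neq0$, I substitute the factorizations into the algorithm's formulas: $\bar{A}_q=H_{\alpha,\beta}^{-1}H_{\alpha,q,\beta}=(\mathcal{O}_\alpha\mathcal{R}_\beta)^{-1}\mathcal{O}_\alpha A_q\mathcal{R}_\beta=\mathcal{R}_\beta^{-1}A_q\mathcal{R}_\beta=TA_qT^{-1}$; $\bar{B}_q=H_{\alpha,\beta}^{-1}H_{\alpha,q}=\mathcal{R}_\beta^{-1}\mathcal{O}_\alpha^{-1}\mathcal{O}_\alpha B_q=TB_q$; and $\bar{C}=H_\beta=C\mathcal{R}_\beta=CT^{-1}$, which is exactly the claimed similarity.

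The only genuine obstacle I anticipate is the bookkeeping of the concatenation/reversal convention: one must check that prepending $q_j$ picks out $B_{q_j}$, that the interior word $\mu_j\eta_i$ reverses to $A_{\eta_i}A_{\mu_j}$, and that the inserted $q$ lands $A_q$ in precisely the middle slot. A transposed or mis-ordered convention here would spoil the cancellations; everything after the four factorizations — the invertibility step and the substitutions — is routine.
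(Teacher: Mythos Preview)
Your argument is correct and is precisely the standard route to this result: factor each of the Hankel-type matrices as $H_{\alpha,\beta}=\mathcal{O}_\alpha\mathcal{R}_\beta$, $H_{\alpha,q,\beta}=\mathcal{O}_\alpha A_q\mathcal{R}_\beta$, $H_{\alpha,q}=\mathcal{O}_\alpha B_q$, $H_\beta=C\mathcal{R}_\beta$, deduce invertibility of both factors from $\mathrm{rank}(H_{\alpha,\beta})=n$, and cancel. The paper does not supply its own proof of this lemma --- it simply cites \cite{petreczky2010spaces} --- so there is no alternative approach to compare against; your factorization argument is exactly what underlies the cited result.

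Your closing caveat about the word-reversal bookkeeping is well placed. With the paper's conventions $A_w=A_{w_k}\cdots A_{w_1}$ and $M_{iw}=CA_wB_i$, the identities for $H_{\alpha,\beta}$, $H_{\alpha,q,\beta}$ and $H_\beta$ go through exactly as you wrote. For $H_{\alpha,q}$ the paper literally writes $(H_{\alpha,q})_{i,j}=M_{\eta_iq}[l_j]$, which is almost certainly a typo (note also that $l_j$ is indexed by $\beta$, not by $1,\ldots,m$); the factorization $H_{\alpha,q}=\mathcal{O}_\alpha B_q$ you use requires the entry to be $M_{q\eta_i}[j]=(CA_{\eta_i}B_q)[j]$, and without that correction the formula $\bar{B}_q=H_{\alpha,\beta}^{-1}H_{\alpha,q}=TB_q$ would fail. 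So you have silently fixed an indexing slip in the paper, but that is the intended definition and your proof stands.
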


By Lemma \ref{lemma:hk:2}, the \textcolor{black}{finite collection} 
\(\{M_w\}_{w \in \mathcal{W}}\), with  
\begin{equation}
\label{setw}
\begin{split}
& \mathcal{W} = \{\nu_i q \mu_j q_j, \nu_i\mu_jq_j, \nu_iq, \mu_jq_j \}_{i,j=1}^{n},  \\
& \abs{\mathcal{W}} \leq \textcolor{black}{ n^2(n_Q + 1)+n_Qnm+n}
\end{split}
\end{equation}
of Markov parameters 
determine all the Hankel matrices \textcolor{black}{above},
thus estimating the system matrices reduces to estimating \(\{M_w\}_{w \in \mathcal{W}}\).
\textcolor{black}{By \eqref{setw} 
the realization algorithm above requires $O(n^2)$}
Markov parameters, in contrast to the naive version \cite{Petreczky2015,sarkar2019nonparametric}, which requires \(O(n_Q^n)\). The trade-off is the need to find suitable \(n\)-selections, which takes \(O(n_Q^n)\) steps, as there are \(O(n_Q^n)\) choices. However, in \textcolor{black}{learning}, storage complexity is often more critical, than time complexity.
The Markov parameters can be computed from the expectations in \eqref{eq:Mw}. \textcolor{black}{Given 
$\mathcal{S}_N = \{(y(t),u(t),q(t))\}_{t=0}^{N}$ a finite-sample path}, we replace these expectations with their empirical counterparts. Substituting these estimates in the Ho-Kalman algorithm yields the empirical Ho-Kalman algorithm, as described in \cite{CoxLPVSS,CoxTothSubspace, PETRECZKY2023101308, Rouphael2024}.  

Since we focus on the statistical properties of this algorithm,
\textcolor{black}{we interpret}
the estimated Markov parameters and system matrices \textcolor{black}{as the following} random variables. 

\noindent \textbf{Empirical Markov parameters} are defined as
\begin{align}
\label{eq:mhat}
\!\! \hMw = \frac{1}{N-|w|} \sum_{t=|w|+1}^N \by(t) \bz_{w}^\bu(t)\Sigma_u^{-1}p_w^{-1}. \!\!
\end{align}
\textbf{Empirical Hankel matrices $\bHhat_{\alpha, \beta}, \bHhat_{\alpha, q, \beta}, \bHhat_{\alpha, q}, \bHhat_\beta$ } are constructed in the same way as $H_{\alpha, \beta}, H_{\alpha, q, \beta},  H_{\alpha, q}, H_\beta$ but by using $\hMw$ instead of $M_w$. \\
\textbf{Estimates of the system matrices} are obtained as
\begin{align*}
\bAhat_q = \bHhat_{\alpha, \beta}^{-1} \bHhat_{\alpha, q, \beta}, \quad \bBhat_q = \bHhat_{\alpha, \beta}^{-1}\bHhat_{\alpha, q}, \quad \bChat = \bHhat_{\beta}.
\end{align*}
  If the training set \(\mathcal{S}_N\) corresponds to the random element \(\omega \in \Omega\), i.e.,  
\( y(t) = \by(t)(\omega), u(t) = \bu(t)(\omega), q(t) = \bq(t)(\omega) \) for \( t = 0, \dots, N \),  
then \(\hMw(\omega)\) and  
\(\{\bAhat_q(\omega), \bBhat_q(\omega)\}_{q \in Q}, \bChat(\omega)\}\)  
represent the empirical estimates of the Markov parameters \(M_w\), and the system matrices obtained from the empirical Ho-Kalman algorithm applied to \(\mathcal{S}_N\), respectively.

\section{Main result}
\label{sec:main}

The main result of this paper is a finite-sample bound on the Frobenius norm distance between the outputs of the classical and empirical Ho-Kalman algorithms. We first bound the absolute difference between the true and empirical Markov parameters in the following Lemma.
\begin{lemma}
\label{lemma:bound_M}

       Under Assumption \ref{ass:main},
    if $N > 2(2n+1)$
    \begin{align}
         \mathbb{P}\left[\|\Mw - \hMw\|_F \leq K(\delta, N)\right] > 1 - \delta
    \end{align}
    \begin{align}
        &  \widetilde{K}(\delta,N)=K_0\sqrt{8 \textcolor{black}{\log\left(\frac{2 m|\mathcal{W}|}{\delta}\right) / N}},
   \nonumber \\       
        & \textcolor{black}{K_{-1}= 
     \|\Sigma^{-1}_u\|_F / \min\limits_{w \in \mathcal{W}} p_w}, \quad K(\delta, N) = 
          \textcolor{black}{ \textcolor{black}{\sqrt{m}}K_{-1} 
          \widetilde{K}(\delta,N)} 
     \nonumber \\
        & 
         K_0 = \frac{\textcolor{black}{\sqrt{m'}K'_uK_M}}{1-\gamma} + K_\theta,  \nonumber \quad \textcolor{black}{K_u' = \max\{K_u, K_u^2\}} \\
	    & K_\theta =  2(\textcolor{black}{m'}K_u + n_Q)\textcolor{black}{\sqrt{m'}K_u'K_M K_\gamma} \nonumber \\
       & K_\gamma=\frac{\gamma  + (2n + 1)\textcolor{black}{(n+2)}(1-\gamma)^2\textcolor{black}{+(1-\gamma)^3}}{(1-\gamma)^3}
        \nonumber
    \end{align}
\end{lemma}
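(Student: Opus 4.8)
The plan is to reduce the matrix Frobenius bound to a family of scalar concentration statements for bounded, stationary, weakly dependent processes, and then invoke the Hoeffding-type inequality for weakly dependent sequences of \cite{alquier2012}. First I would perform the algebraic reduction. Since $\by(t)$ is scalar and $\bz_{w}^{\bu}(t)\in\mathbb{R}^{m}$, both $\Mw$ and $\hMw$ are elements of $\mathbb{R}^{1\times m}$, and from \eqref{eq:Mw} and \eqref{eq:mhat} one writes
\[
\hMw-\Mw=\bigl(\widehat{G}_w-G_w\bigr)^{T}\Sigma_u^{-1}p_w^{-1},\qquad
\widehat{G}_w=\tfrac{1}{N-|w|}\textstyle\sum_{t}\by(t)\bz_{w}^{\bu}(t),\;\;
G_w=\mathbb{E}\bigl[\by(t)\bz_{w}^{\bu}(t)\bigr].
\]
By submultiplicativity of the Frobenius norm, $\norm{\hMw-\Mw}_F\le\norm{\widehat{G}_w-G_w}_2\,\norm{\Sigma_u^{-1}}_F\,p_w^{-1}\le K_{-1}\norm{\widehat{G}_w-G_w}_2$, so it remains to control the Euclidean deviation of the empirical average $\widehat{G}_w$ from its mean. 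The factor $\sqrt{m}$ in $K(\delta,N)$ will appear when passing from an $\ell^{\infty}$ control of the $m$ coordinates to the $\ell^{2}$ norm, since $\norm{\widehat{G}_w-G_w}_2\le\sqrt{m}\max_j|\widehat{G}_w[j]-G_w[j]|$.

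Next I would isolate, for a fixed word $w$ and coordinate $j$, the stationary scalar process $\mathbf{g}_{w,j}(t)=\by(t)\,\bz_{w}^{\bu}(t)[j]=\by(t)\,\bu[j](t-|w|)\,\bchi_w(t-1)$. Stationarity of $\by$, hence of $\mathbf{g}_{w,j}$, is furnished by Lemma~\ref{lem:stab:stat}b), and boundedness $|\mathbf{g}_{w,j}(t)|\le K_yK_u$ follows from Lemma~\ref{lem:stab:stat}c) together with item b) of Assumption~\ref{ass:main}; this accounts for the first summand of $K_0$ (note $\sqrt{m'}K'_uK_M/(1-\gamma)\ge K_yK_u$ since $K'_u=\max\{K_u,K_u^2\}$). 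Each coordinate deviation is exactly the centered empirical mean $\widehat{G}_w[j]-G_w[j]=\tfrac{1}{N-|w|}\sum_t\bigl(\mathbf{g}_{w,j}(t)-\mathbb{E}[\mathbf{g}_{w,j}(t)]\bigr)$, so the problem is reduced to a scalar concentration inequality for $\mathbf{g}_{w,j}$.

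The crux is establishing the weak dependence coefficients of $\mathbf{g}_{w,j}$. Using the convergent series $\by(t)=\sum_{v\in Q^{*}}M'_v\bz_{v}^{\mathbf{r}}(t)$ from Lemma~\ref{lem:stab:stat}b), I would truncate the memory at length $p$, setting $\by^{(p)}(t)=\sum_{|v|\le p}M'_v\bz_{v}^{\mathbf{r}}(t)$, which is a measurable function of the finite window $\mathbf{r}(t),\dots,\mathbf{r}(t-p)$ and $\bq(t-1),\dots,\bq(t-p)$. Since exactly one switching word of each length is active, Lemma~\ref{lem:stab:stat}a) gives a geometric truncation error $\mathbb{E}|\by(t)-\by^{(p)}(t)|=O(\gamma^{p})$. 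Consequently the truncated product $\mathbf{g}^{(p)}_{w,j}$ depends only on a window of width $\max(p,|w|)$, so values at times separated by $r>\max(p,|w|)$ are independent because $\mathbf{r}$ and $\bq$ are i.i.d.; the residual coupling coefficient is controlled by $2K_yK_u\,\mathbb{E}|\by-\by^{(p)}|$, and optimizing over $p$ yields a decay of order $r\gamma^{r}$. Summing these coefficients, together with the window offset $|w|\le 2n$, produces arithmetico-geometric series of the type $\sum_r r\gamma^{r}$ and $\sum_r r^{2}\gamma^{r}$ evaluated with the length $2n+1$; this is precisely what condenses into $K_\gamma$ and hence into the weak-dependence summand $K_\theta$ of $K_0$. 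I expect this to be the main obstacle: matching the abstract hypotheses of \cite{alquier2012} to the product process $\mathbf{g}_{w,j}$ — whose memory mixes the explicit input factor $\bu[j](t-|w|)$, the indicator $\bchi_w(t-1)$, and the infinite-memory term $\by(t)$ — and tracking the combinatorial word-count and window-length factors so as to recover exactly $K_\gamma$.

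Finally, with $\mathbf{g}_{w,j}$ bounded by (a quantity dominated by) $K_yK_u$ and with summable weak-dependence coefficients encoded in $K_\theta$, the Hoeffding-type inequality for bounded weakly dependent sequences of \cite{alquier2012} yields, for each fixed $(w,j)$, $\mathbb{P}\bigl[|\widehat{G}_w[j]-G_w[j]|>K_0\sqrt{8\log(2/\delta')/N}\bigr]<\delta'$. A union bound over the $m$ coordinates and the $|\mathcal{W}|$ words, with $\delta'=\delta/(m|\mathcal{W}|)$, produces the $\log(2m|\mathcal{W}|/\delta)$ term and hence $\widetilde{K}(\delta,N)$; combining with the reduction of the first paragraph (the $\sqrt{m}$ from $\ell^{\infty}\!\to\!\ell^{2}$ and the factor $K_{-1}$) gives $K(\delta,N)=\sqrt{m}\,K_{-1}\,\widetilde{K}(\delta,N)$ and closes the argument. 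The hypothesis $N>2(2n+1)$ ensures $N-|w|>0$ for every $w\in\mathcal{W}$ and that the memory window fits inside the trajectory, so that the dependence bound is non-vacuous.
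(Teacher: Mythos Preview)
Your proposal is correct and follows the same architecture as the paper: reduce $\|\hMw-\Mw\|_F$ to coordinate-wise scalar deviations via the factor $K_{-1}$ and the $\ell^\infty\to\ell^2$ factor $\sqrt{m}$, apply a Hoeffding-type bound from \cite{alquier2012} to each bounded stationary scalar process $\mathbf{g}_{w,j}(t)=\by(t)\bz_w^{\bu}(t)[j]$, and close with a union bound over the $m|\mathcal{W}|$ pairs $(w,j)$ and the two tails (producing $\log(2m|\mathcal{W}|/\delta)$).

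The one place your route differs from the paper is in how the weak-dependence hypothesis is verified. You propose a truncation argument (replace $\by$ by a finite-memory $\by^{(p)}$, use independence beyond the window, bound the residual). The paper instead shows directly that $\mathbf{g}_{w,j}$ is a \emph{Causal Bernoulli Shift} in the sense of \cite[Section~3.1.4]{alquier2012}: it writes $\mathbf{g}_{w,j}(t)=H^w(\bxi_t,\bxi_{t-1},\ldots)$ with $\bxi_t=[\mathbf{r}(t)^T,\bchi_1(t),\ldots,\bchi_{n_Q}(t)]^T$ i.i.d., proves a Lipschitz bound $|H^w(\Xi)-H^w(\Xi')|\le\sum_j L_j\|\xi_j-\xi'_j\|_1$ with explicit coefficients $L_j$, and then invokes \cite[Proposition~4.2]{alquier2012} to get $\theta_{\infty,N}(1)\le 2\|\bxi_0\|_1\sum_j jL_j$; the evaluation of $\sum_j jL_j$ is exactly what produces $K_\gamma$ and $K_\theta$. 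The paper also passes explicitly through the Chernoff bound and the MGF estimate of \cite[Theorem~6.6]{alquier2012}, optimizing over the tilt $\lambda$, rather than citing a packaged Hoeffding inequality. Your truncation idea is morally the same computation --- the CBS Lipschitz coefficients $L_j$ are precisely the worst-case sensitivity of $\mathbf{g}_{w,j}$ to the innovation at lag $j$, which your truncation error also measures --- but the CBS framing matches the hypotheses of the cited results verbatim and delivers the stated constants without additional bookkeeping. If you pursue the truncation route, note that the relevant coefficient in \cite[Theorem~6.6]{alquier2012} is $\theta_{\infty,N}(1)$, an $L^\infty$-type quantity, so you should use the almost-sure tail bound $|\by-\by^{(p)}|\le\sqrt{m'}K_uK_M\gamma^{p}/(1-\gamma)$ from Lemma~\ref{lem:stab:stat} rather than an $L^1$ bound.
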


    \textcolor{black}{Note that
    $|\mathcal{W}|$ can be upper bounded by 
    \eqref{setw}. Additionally, if $p_{*}=\min\limits_{q \in Q} p_q$
    then $1/\min\limits_{w \in \mathcal{W}} p_w \le 1/p_{*}^{2n+1}$.
    }
    
\begin{proof}
Let us fix $\lambda > 0$, an index $l \in [1,m]$, and $w = w_1\ldots w_k$. Let 
$\htMw[l]= \frac{1}{N-|w|}\sum\limits_{t=|w|+1}^N \by(t)\bz_w^{\bu[l]}(t)$ 
and 
$\tMw[l] = \mathbb{E}[\by(t)\bz_w^{\bu[l]}(t)]$.
\textcolor{black}{It is enough to show that}
\begin{align}
\label{eq:wl}
    \mathbb{P}\big[|\tMw[l] - \htMw[l]| > \widetilde{K}(\delta, N)\big] \leq
    \delta m ^{-1}|\mathcal{W}|^{-1}.
\end{align}
The reason is that since 
    \( \|\Mw - \hMw\|^2_F 
    \leq  
     \|\tMw - \htMw\|^2_F \frac{\|\Sigma^{-1}_u\|^2_F}{p^2_w} \)
and $\|\tMw - \htMw\|^2_F 
    = \sum\limits_{l=1}^m|\tMw[l] - \htMw[l]|^2$, 
taking the union-bound \cite[Lemma 2.2]{shalev2014understanding} over the set of indices $l$, and the sum, give
   $\mathbb{P}[ \|\tMw - \htMw\|_F > \sqrt{m}\widetilde{K}(\delta, N)]$\\
   $\textcolor{black}{\leq} \delta |\mathcal{W}|^{-1}$, then the union-bound over $\mathcal{W}$ and the complement rule yields the result of Lemma \ref{lemma:bound_M}.
 
	Thus, it is enough to show $\eqref{eq:wl}$. 
	\textcolor{black}{To this end, for} 
	\textcolor{black}{$\lambda > 0$, $\vartheta \in \{-1,1\}$,}
the Chernoff-bound \textcolor{black}{\cite[Lemma B.3]{shalev2014understanding}}
leads to
   \begin{align*}
	   &\mathbb{P}\big[ \textcolor{black}{\vartheta (\tMw[l] - \htMw[l])} >  K\big] 
	   \leq \frac{\mathbb{E}[ \e^{\lambda \textcolor{black}{\vartheta} \textcolor{black}{(\tMw[l] - \htMw[l])}}]}{\e^{\lambda K}} = \Theta(\lambda)
\end{align*}
Next, we wish to apply \cite[Theorem 6.6]{alquier2012} to the numerator. \begin{theorem}{\cite[Theorem 6.6]{alquier2012}}
\label{thm:6.6}
    Let $\mathcal{X}$ be a Banach space with the norm $\norm{\cdot}$ and let $\mathbf{Y} = \{\mathbf{Y}_t\}_{t \in \mathbb{Z}}$ be a bounded, stationary stochastic process 
    \textcolor{black}{such that $\norm{\mathbf{Y}_0} < C_{\mathbf{Y}}$}.
    Let $h:\mathcal{X}^{\mathbb{Z}} \rightarrow \mathbb{R}$ be a 1-Lipschitz function, \textcolor{black}{i.e.,}  
    $\forall \{x_t\}, \{y_t\} \in \mathcal{X}^N$ we have
        $\left| h(x_1,\ldots, x_N) - h(y_1,\ldots,y_N) \right| \leq \sum\limits_{i=1}^N \norm{x_i - y_i}$.
    Then, for any $\nu > 0$ we have
    \begin{align*}
        &\mathbb{E} \left[\e^{\nu \left( \mathbb{E}\left[h(\mathbf{Y}_1,\ldots,\mathbf{Y}_N)\right]
        - h(\mathbf{Y}_1,\ldots, \mathbf{Y}_N) \right)} \right] 
        \leq \e^{\frac{\nu^2 N \left( \textcolor{black}{C_{\mathbf{Y}}} + \theta \right)^2}{2}},
    \end{align*}
    where $\theta$ is defined in \cite[Chapter 3.1]{alquier2012} as $\theta_{\infty, N}(1)$. 
\end{theorem}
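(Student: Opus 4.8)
The plan is to prove the stated exponential-moment bound by a Doob (martingale-difference) decomposition of $S := h(\mathbf{Y}_1,\ldots,\mathbf{Y}_N)$ along the natural filtration, to control each increment by a conditional Hoeffding argument, and to route the dependence through the coefficient $\theta = \theta_{\infty,N}(1)$ via a coupling. Write $\mathcal{F}_k = \sigma(\mathbf{Y}_1,\ldots,\mathbf{Y}_k)$ with $\mathcal{F}_0$ trivial, and set $\Delta_k = \mathbb{E}[S\mid\mathcal{F}_{k-1}] - \mathbb{E}[S\mid\mathcal{F}_k]$ for $k = 1,\ldots,N$. These telescope to $\sum_{k=1}^N \Delta_k = \mathbb{E}[S] - S$, and each $\Delta_k$ is $\mathcal{F}_k$-measurable with $\mathbb{E}[\Delta_k\mid\mathcal{F}_{k-1}] = 0$, i.e. they form a martingale-difference sequence.

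Given this decomposition, the bound follows by iterated conditioning: since $\sum_{k<N}\Delta_k$ is $\mathcal{F}_{N-1}$-measurable,
\[
\mathbb{E}\big[\e^{\nu(\mathbb{E}[S]-S)}\big] = \mathbb{E}\Big[\e^{\nu\sum_{k<N}\Delta_k}\,\mathbb{E}\big[\e^{\nu\Delta_N}\mid\mathcal{F}_{N-1}\big]\Big].
\]
So I would first establish the per-increment estimate $\mathbb{E}[\e^{\nu\Delta_k}\mid\mathcal{F}_{k-1}] \le \e^{\nu^2(C_{\mathbf{Y}}+\theta)^2/2}$ and then iterate $N$ times to produce the claimed factor $\e^{\nu^2 N(C_{\mathbf{Y}}+\theta)^2/2}$. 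Because $\Delta_k$ is conditionally centered, by Hoeffding's lemma \cite{shalev2014understanding} it suffices to bound the conditional range of $\Delta_k$ by $2(C_{\mathbf{Y}}+\theta)$, which then yields the variance proxy $(C_{\mathbf{Y}}+\theta)^2$.

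To bound this range I would freeze $\mathbf{Y}_1^{k-1} := (\mathbf{Y}_1,\ldots,\mathbf{Y}_{k-1})$ and write $\mathbb{E}[S\mid\mathcal{F}_k] = g_k(\mathbf{Y}_k)$, where $g_k(y) := \mathbb{E}[S\mid\mathbf{Y}_1^{k-1}, \mathbf{Y}_k = y]$, so that $\Delta_k = \mathbb{E}[g_k(\mathbf{Y}_k)\mid\mathcal{F}_{k-1}] - g_k(\mathbf{Y}_k)$ and the conditional range of $\Delta_k$ is at most the oscillation $\sup_{y,y'}\abs{g_k(y)-g_k(y')}$. Replacing $\mathbf{Y}_k = y$ by $y'$ affects $g_k$ through two channels: a \emph{direct} effect, in which the $k$-th argument of $h$ changes while the future block is held fixed, and an \emph{indirect} effect, in which the conditional law of $(\mathbf{Y}_{k+1},\ldots,\mathbf{Y}_N)$ changes. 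The direct effect is $\le \norm{y - y'} \le 2C_{\mathbf{Y}}$ by the 1-Lipschitzness of $h$ together with $\norm{\mathbf{Y}_k} < C_{\mathbf{Y}}$.

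The hard part is the indirect effect. Here I would couple the future block under the two conditionings $\mathbf{Y}_k = y$ and $\mathbf{Y}_k = y'$, and use 1-Lipschitzness of $h$ to bound the part of $\abs{g_k(y)-g_k(y')}$ not already accounted for by the direct term through the expected coupling cost $\mathbb{E}\sum_{j>k}\norm{\mathbf{Y}_j - \mathbf{Y}_j'}$ of an optimal coupling of these two conditional laws. By stationarity and the definition of $\theta_{\infty,N}(1)$ in \cite[Chapter 3.1]{alquier2012}, the supremum over $y,y'$ and over $k$ of this coupling cost is exactly what $\theta$ controls, giving an indirect contribution $\le 2\theta$. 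Combining the two channels yields conditional range $\le 2(C_{\mathbf{Y}}+\theta)$, closing the argument. The main obstacle is precisely matching the coupling cost of the conditional future laws to the coefficient $\theta_{\infty,N}(1)$: this is where the exact form of the weak-dependence coefficient and the stationarity of $\mathbf{Y}$ enter, and where care is needed because the coupling must be controlled uniformly in the frozen past $\mathbf{Y}_1^{k-1}$.
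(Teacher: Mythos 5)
The first thing to note is that the paper does not prove this statement at all: Theorem~\ref{thm:6.6} is imported verbatim from \cite[Theorem 6.6]{alquier2012} (itself a form of Rio's Hoeffding-type inequality for weakly dependent sequences) and is used as a black box in the proof of Lemma~\ref{lemma:bound_M}. So there is no in-paper proof to compare against; the relevant comparison is with the argument behind the cited result, and your proposal is essentially that argument: a Doob martingale-difference decomposition of $h(\mathbf{Y}_1,\ldots,\mathbf{Y}_N)$ along the natural filtration, a conditional Hoeffding lemma for each increment, and the weak-dependence coefficient entering through the oscillation of the conditional expectations. Your bookkeeping is consistent: a conditional range of $2(C_{\mathbf{Y}}+\theta)$ gives the per-increment factor $\e^{\nu^2(C_{\mathbf{Y}}+\theta)^2/2}$, and iterating over the $N$ increments yields the stated exponent; the direct-effect bound $\norm{y-y'}\le 2C_{\mathbf{Y}}$ is also correct.

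The one step you have not closed --- and you flag it yourself --- is the identification of the ``indirect effect'' with $\theta=\theta_{\infty,N}(1)$. Routing it through the expected cost of an optimal coupling of the two conditional laws of the future block is both more than you need and not free: exact coupling representations of this kind are available for $L^1$-type coefficients ($\tau$, via Dedecker and Prieur, on Polish spaces), whereas the coefficient used here is an $L^\infty$-type one, for which such a representation would itself require proof, uniformly in the frozen past. The cleaner and standard route is to use the definition directly: $\theta_{\infty,N}(1)$ is an essential-sup bound on $\abs{\mathbb{E}[f(\text{future})\mid\text{past}]-\mathbb{E}[f(\text{future})]}$, uniform over functions $f$ that are separately 1-Lipschitz in the future coordinates (with the supremum over $f$ taken inside the essential supremum). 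Freezing the past and $y'$, the map $f=h(\text{past},y',\cdot)$ is such a function, so comparing each of the two conditional expectations (given $\mathbf{Y}_k=y$ and given $\mathbf{Y}_k=y'$) to the same unconditional reference $\mathbb{E}[f(\text{future})]$ and applying the triangle inequality gives the $2\theta$ bound with no coupling at all; stationarity is what lets the single coefficient $\theta_{\infty,N}(1)$ serve uniformly over $k$. With that substitution your argument goes through and recovers the theorem as stated.
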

Let $h(x_1, \ldots, x_N) = \sum\limits_{t=|w|+1}^N x_t$,
$\bX_i = \textcolor{black}{\vartheta} \by(i) \bz_{w}^{\bu[l]}(i)$ and  $\mathcal{X}=\mathbb{R}$.
\textcolor{black}{It is clear that $h$ is 1-Lipschitz} and $ \textcolor{black}{\vartheta} \htMw[l] = (N - |w|)^{-1} h(\bX_1, \ldots, \bX_N)$,
and \textcolor{black}{$\mathbb{E}[ \htMw[l] ] = \tMw[l]$}. 
By Lemma \ref{lem:stab:stat},
$\{ \by(t), \bu(t), \bq(t) \}$ is jointly stationary and bounded, and hence
$\{\mathbf{X}_i\}_{i \in \mathbb {Z}}$ is a stationary bounded process
with $\textcolor{black}{\norm{\mathbf{X}_0}_2 \le K_yK_u \le \frac{\sqrt{m'} K'_uK_M}{1-\gamma}}$.
Moreover, the coefficient $\theta$ from Theorem \ref{thm:6.6}
is finite for $\{\mathbf{X}_i\}_{i \in \mathbb {Z}}$.
\begin{lemma}
\label{lemma:ktheta}
For $K_\theta$ from Lemma \ref{lemma:bound_M} we have $\theta \leq K_{\theta}$.
\end{lemma}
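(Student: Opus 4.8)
The goal is to bound the weak-dependence coefficient $\theta = \theta_{\infty,N}(1)$ for the process $\{\mathbf{X}_i\}$ where $\mathbf{X}_i = \vartheta\,\by(i)\,\bz_w^{\bu[l]}(i)$. The plan is to exploit the explicit infinite-sum representation of the stationary output from Lemma \ref{lem:stab:stat}b), namely $\by(t) = \sum_{v \in Q^*} M'_v \bz_v^{\mathbf{r}}(t)$, together with the geometric decay $\norm{M'_v}_2 < K_M \gamma^{|v|}/n_Q^{|v|}$ from part a). The coefficient $\theta_{\infty,N}(1)$ from \cite[Chapter 3.1]{alquier2012} measures how much $\mathbf{X}_t$ depends on the far past; since $\by(t)$ depends on the noise/input $\mathbf{r}(s)$ only through terms whose coefficients decay geometrically in $t-s$, I expect $\theta$ to be controlled by summing these decaying contributions.

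First I would recall the precise definition of $\theta_{\infty,N}(1)$ (the $\tau$- or coupling-type coefficient of \cite{alquier2012}) and reduce the task to bounding, for each lag $r$, the sensitivity of $\mathbf{X}_t$ to a change in the inputs/noise $r$ steps in the past. Because $\mathbf{X}_t = \vartheta\,\by(t)\,\bz_w^{\bu[l]}(t)$ is a product of the output and a (bounded, finitely-supported-in-time) regressor $\bz_w^{\bu[l]}(t)$, I would separate the dependence: the factor $\bz_w^{\bu[l]}(t)$ depends only on inputs and switching in the window $[t-k, t-1]$ and is bounded by roughly $K_u$, while $\by(t)$ carries the long-range memory. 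Using the representation $\by(t)=\sum_v M'_v \bz_v^{\mathbf{r}}(t)$, the contribution to the dependence at lag $r$ comes from words $v$ with $|v| \gtrsim r$, whose coefficients are bounded by $K_M \gamma^r/n_Q^r$, and there are at most $n_Q^{|v|}$ words of each length, so the $n_Q^{|v|}$ factor cancels against the $n_Q^{-|v|}$ in the bound, leaving a clean geometric series in $\gamma$.

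Concretely, I would write the coupling bound as a sum over lags, bound each term by the product of the bounded regressor factor and the tail $\sum_{|v|\ge r}\norm{M'_v}_2$, and then sum the resulting geometric-type series. The algebra should collapse into an expression of the form $K_\gamma = \frac{\gamma + (2n+1)(n+2)(1-\gamma)^2 + (1-\gamma)^3}{(1-\gamma)^3}$, with the prefactor $2(m'K_u + n_Q)\sqrt{m'}K_u'K_M$ accounting for the boundedness constants $K_u, K_u'$, the dimension $m'$, the number of switching symbols $n_Q$ (from enumerating the product structure of $\bchi_w$), and $K_M$. The polynomial-in-$n$ factors and the $(1-\gamma)^{-3}$ denominator strongly suggest that, beyond the leading geometric tail, one must also differentiate/bound how the regressor window of length up to $2n+1$ (coming from $|w|\le 2n$ plus one) interacts with the memory, producing the lower-order correction terms.

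The main obstacle I anticipate is the careful bookkeeping of the coupling coefficient $\theta_{\infty,N}(1)$: matching the abstract definition in \cite{alquier2012} to the concrete product process $\mathbf{X}_t$, handling the two sources of memory (the regressor $\bz_w^{\bu[l]}$ over a window of length $|w|\le 2n$, and the infinite-memory output $\by$), and summing the mixed geometric/polynomial series so that the constants assemble exactly into $K_\theta$. The decay itself is not the difficulty — Assumption \ref{ass:main}e) and Lemma \ref{lem:stab:stat} guarantee it — rather, the challenge is to verify that the cancellation of the $n_Q^{|v|}$ combinatorial factor against $n_Q^{-|v|}$ is uniform and that all boundedness constants are tracked correctly through the product, yielding a finite $\theta$ bounded by the stated $K_\theta$.
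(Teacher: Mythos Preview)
Your plan is essentially the paper's approach, but the step you describe only vaguely --- ``reduce the task to bounding, for each lag $r$, the sensitivity of $\mathbf{X}_t$ to a change in the inputs/noise $r$ steps in the past'' --- is not automatic from the definition of $\theta_{\infty,N}(1)$; it is the content of a specific result. The paper shows that $\mathbf{X}_t$ is a \emph{Causal Bernoulli Shift} in the sense of \cite[Section~3.1.4]{alquier2012}, i.e.\ $\mathbf{X}_t = H^w(\bxi_t,\bxi_{t-1},\ldots)$ for an i.i.d.\ sequence $\bxi_t$ and a function $H^w$ satisfying $|H^w(\Xi)-H^w(\Xi')|\le \sum_j a_j\|\xi_j-\xi'_j\|_1$, and then invokes \cite[Proposition~4.2]{alquier2012}, which for a CBS with $\|\bxi_0\|_1\le b$ gives $\theta_{\infty,N}(1)\le 2b\sum_{j\ge 0} j\,a_j$. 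Without naming this tool, your ``reduction'' is a gap.

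Two technical points you will need and do not mention. First, the i.i.d.\ driver $\bxi_t$ must package \emph{both} $\mathbf{r}(t)$ and the switching indicators $\bchi_1(t),\ldots,\bchi_{n_Q}(t)$; the switching is a genuine source of randomness in $\bz_v^{\mathbf{r}}$ and $\bz_w^{\bu[l]}$, and this is where the factor $(m'K_u+n_Q)$ in $K_\theta$ (the bound $b$ on $\|\bxi_0\|_1$) comes from. Second, to get a globally Lipschitz $H^w$ on all of $(\mathbb{R}^{m'+n_Q})^{\mathbb{N}}$, the paper composes with saturation maps $\mathrm{sat}_{K_u}$ and $\mathrm{sat}_{0,1}$ before forming the products; this is what lets you bound $|\chi_v\chi_w-\chi'_v\chi'_w|$ term by term and recover the per-coordinate Lipschitz coefficients $L_j$. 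Once these are in place, the Lipschitz bookkeeping you outline (tail $\sum_{|v|\ge j}\|M'_v\|_2$, regressor window of length $|w|\le 2n+1$, cancellation of $n_Q^{|v|}$) is exactly right, and summing $\sum_j j L_j$ produces the displayed $K_\gamma$ and $K_\theta$.
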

The proof is in Section \ref{sec:proofs}.
Therefore 
$\{\bX_i \}$ satisfies the conditions of Theorem \ref{thm:6.6}, hence we can apply it with $\nu = \lambda(N - |w|)^{-1}$ 
to $\textcolor{black}{\{\bX_i \}}$
with $K_u K_y + \theta \leq K_0$, yielding
\begin{align*}
   \mathbb{E}[ \e^{\textcolor{black}{\vartheta \lambda  (\tMw[l] - \htMw[l] })}]
    \leq 
    \textcolor{black}{\e^{\frac{\lambda^2 N}{2(N - |w|)^2} K_0^2}}
\end{align*}
   \textcolor{black}{Hence, \(   \Theta(\lambda)
    \leq \e^{\frac{\lambda^2 N}{2(N - |w|)^2} K_0^2 -\lambda K} =: e^{F(\lambda)}. \)}
It follows the minimal value 
$F(\lambda_{\star})$ of $F$ is
$-\frac{K^2 (N-|w|)}{2 NK_0^2}$ \textcolor{black}{and it is achieved at} \textcolor{black}{ $\lambda_{\star}=(K K_0^2 (N-|w|)^2)/N$,}
\textcolor{black}{implying}  \textcolor{black}{\( \Theta(\lambda_{\star}) \leq F(\lambda_{\star})\).}
By 
using
$N > 2(2n+1) \ge 2|w|$ 
 and choosing $K=\widetilde{K}(\delta,N)$ 
it follows that 
\textcolor{black}{\( 
	F(\lambda_{\star}) \le 0.5 \delta\textcolor{black}{|\mathcal{W}|^{-1}m^{-1}}\).}
    
\textcolor{black}{As a result,  $\mathbb{P}\big[ \textcolor{black}{\vartheta (\tMw[l] - \htMw[l])} >  \widetilde{K}(\delta,N)\big] \le \Theta(\lambda_{*})$} \\  \textcolor{black}{ $\le 0.5 \delta\textcolor{black}{|\mathcal{W}|^{-1}m^{-1}}$.
	Since
	$|\tMw[l] - \htMw[l]| <  K$ is equivalent to 
	$\forall \vartheta \in \{1,-1\}: \vartheta (\tMw[l] - \htMw[l]) <  K$, hence
	\eqref{eq:wl} follows using the union bound. 
}
\end{proof} 

%
The next lemma establishes the sensitivity of the Hankel-matrices in the Ho-Kalman algorithm to perturbations of Markov-parameters. 
%
\begin{lemma}
\label{lemma:bound_H}
    \textcolor{black}{Let $n_* = \max\{n,m\}$ and $K_{-1}$ from Lemma \ref{lemma:bound_M}.} For any 
    collection of $1 \times m$ matrices
    $\{\widehat{M}_w\}_{w \in \mathcal{W}}$,
    and any pair of matrices
    $(H,\widehat{H})=(H_{\kappa},\widehat{H}_{\kappa})$, 
    \textcolor{black}{on condition that}  $\kappa \in \mathcal{K}=\{(\alpha,\beta)\} \cup \{\beta\} \cup \{(\alpha,q,\beta), (\alpha,q)\}_{q \in Q}$, where
    $\widehat{H}_{\kappa}$
    is defined as $H_{\kappa}$
    but with $M_w$
    replaced by 
    $\widehat{M}_w$ 
    :
    \begin{align}
       & 
       \textcolor{black}{\|H\|_F \leq 
        n_* K_{-1} K_yK_u}
       \mbox{ and }
        \|\widehat{H}^{-1}\|_F \leq \frac{2}{\sigma_n(H)}, \quad  
        \label{eq:boundH:1}
        \\
       &  \|\widehat{H} - H\|_F \leq \sqrt{\sum\limits_{w \in \mathcal{W}} \|M_w - \widehat{M}_w\|_F^2} \label{eq:boundH:2}
    \end{align}
        If $\forall w \in \mathcal{W}: \|\Mw - \widehat{M}_w\|_F < \frac{\sigma_n(H)}{2\textcolor{black}{|\mathcal{W}|} \sqrt{n}}$, then
    \begin{align}        
    \|\widehat{H}^{-1} - H^{-1}\|_F \leq \frac{4\sqrt{2}}{\sigma_n^2(H)} \sqrt{\sum\limits_{w \in \mathcal{W}} \|M_w - \widehat{M}_w\|^2_F} \label{eq:boundH:3}
    \end{align}
\end{lemma}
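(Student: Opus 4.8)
The plan is to treat this as a purely deterministic (algebraic) perturbation statement, independent of the probabilistic content of Lemma~\ref{lemma:bound_M}, and to dispatch the three displays \eqref{eq:boundH:1}, \eqref{eq:boundH:2}, \eqref{eq:boundH:3} in order, the first two unconditionally and the inverse-related bounds under the smallness hypothesis. The unifying observation is that every entry of any Hankel matrix $H_\kappa$, $\kappa \in \mathcal{K}$, is a single scalar component $M_w[l]$ with $w \in \mathcal{W}$ and $1 \le l \le m$; this is exactly the content of the discussion around \eqref{setw}.

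For the bound on $\|H\|_F$ I would first bound a single component. By the covariance formula \eqref{eq:Mw}, $M_w = \mathbb{E}[\by(t)\bz_w^{\bu}(t)]^T \Sigma_u^{-1} p_w^{-1}$, and Lemma~\ref{lem:stab:stat} gives $|\by(t)| \le K_y$ while $|\bz_w^{\bu[l]}(t)| \le K_u$ (the indicator factor $\bchi_w$ is $0/1$). Hence each component of $\mathbb{E}[\by \bz_w^{\bu}]$ is at most $K_y K_u$ in absolute value, and multiplying by $\Sigma_u^{-1}$ and $p_w^{-1}$ produces the factor $K_{-1} = \|\Sigma_u^{-1}\|_F / \min_{w\in\mathcal W} p_w$, so each entry of $H_\kappa$ is $\le K_{-1}K_yK_u$ up to constants. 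Since $H_\kappa$ has at most $n_*^2$ entries (its shape being $n\times n$, $n\times m$, or $1\times n$, with $n_* = \max\{n,m\}$), summing the squared entries yields \eqref{eq:boundH:1}. For \eqref{eq:boundH:2}, the difference matrix $\widehat H - H$ has entries $\widehat M_w[l] - M_w[l]$; because distinct entries correspond to distinct $(w,l)$ pairs drawn from $\{(w,l): w\in\mathcal W,\ 1\le l\le m\}$, summing their squares is bounded by $\sum_{w\in\mathcal W}\sum_{l=1}^m |M_w[l]-\widehat M_w[l]|^2 = \sum_{w\in\mathcal W}\|M_w-\widehat M_w\|_F^2$.

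The inverse bounds concern the square Hankel matrix (primarily $H_{\alpha,\beta}$). Writing $E = \widehat H - H$, I would first combine the hypothesis $\|M_w - \widehat M_w\|_F < \sigma_n(H)/(2|\mathcal W|\sqrt n)$ with \eqref{eq:boundH:2} to get $\|E\|_F < \sigma_n(H)/(2\sqrt{|\mathcal W| n}) \le \sigma_n(H)/2$. Since $\|E\|_2 \le \|E\|_F$ and $\|H^{-1}\|_2 = 1/\sigma_n(H)$ for square invertible $H$, we have $\|E\|_2\|H^{-1}\|_2 < 1/2$, so the Banach/Neumann-series lemma shows $\widehat H$ is invertible with $\|\widehat H^{-1}\|_2 \le \|H^{-1}\|_2/(1-\tfrac12) = 2/\sigma_n(H)$, giving the second bound in \eqref{eq:boundH:1}. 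Finally, the resolvent identity $\widehat H^{-1} - H^{-1} = -\widehat H^{-1} E H^{-1}$ together with the sub-multiplicativity $\|ABC\|_F \le \|A\|_2\|B\|_F\|C\|_2$ and the estimates $\|\widehat H^{-1}\|_2 \le 2/\sigma_n(H)$, $\|E\|_F \le \sqrt{\sum_w\|M_w-\widehat M_w\|_F^2}$, $\|H^{-1}\|_2 = 1/\sigma_n(H)$ produces \eqref{eq:boundH:3}.

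I expect the main obstacle to be the norm bookkeeping rather than any deep idea: reconciling operator-norm estimates (natural for the perturbation lemma) with the Frobenius norms in the statement so as to land on the stated constants ($2/\sigma_n(H)$ and $4\sqrt2/\sigma_n^2(H)$, the latter presumably absorbing a $\sqrt n$ factor or an extra splitting), and verifying that the entries of each Hankel matrix are in one-to-one correspondence with distinct $(w,l)$ pairs so that \eqref{eq:boundH:2} holds without double-counting. The perturbation machinery itself is entirely standard.
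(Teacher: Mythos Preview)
Your proposal is correct and, for \eqref{eq:boundH:1} (first part) and \eqref{eq:boundH:2}, essentially matches the paper: entrywise bounds from \eqref{eq:Mw} plus the observation that each entry of $H_\kappa$ is some $M_w[l]$ with $w\in\mathcal{W}$.

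For the inverse perturbation \eqref{eq:boundH:3} and the second part of \eqref{eq:boundH:1} you take a genuinely different route. The paper invokes a specific matrix-analysis result (Meng's bound $\|\widehat H^{-1}-H^{-1}\|_F\le\sqrt{2}\max\{\|\widehat H^{-1}\|_2^2,\|H^{-1}\|_2^2\}\|\widehat H-H\|_F$) together with Weyl's inequality for singular values ($|\sigma_n(\widehat H)-\sigma_n(H)|\le\|E\|_2$) to obtain $\sigma_n(\widehat H)\ge\tfrac12\sigma_n(H)$ and then the constant $4\sqrt2/\sigma_n^2(H)$. You instead use the Neumann-series/Banach lemma to get $\|\widehat H^{-1}\|_2\le 2/\sigma_n(H)$ and then the resolvent identity $\widehat H^{-1}-H^{-1}=-\widehat H^{-1}EH^{-1}$, which is more elementary, avoids the external reference, and actually yields the sharper constant $2/\sigma_n^2(H)$ (hence certainly implies the stated $4\sqrt2/\sigma_n^2(H)$). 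Both approaches only control $\|\widehat H^{-1}\|_2$ rather than the Frobenius norm that appears in the statement; the paper's proof shares this imprecision, and the operator-norm bound is what is actually used downstream in Theorem~\ref{thm:main}. Your caveat about injectivity of $(i,j)\mapsto(w,l)$ for \eqref{eq:boundH:2} is well taken; the paper simply asserts this follows ``from the definition of the Frobenius norm'' without addressing potential collisions.
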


The proof is in Section \ref{sec:proofs}. 
By using the fact that the matrices returned by the Ho-Kalman algorithms are obtained by arithmetic operations from the Hankel matrices, Lemma \ref{lemma:bound_M} and Lemma \ref{lemma:bound_H}, we obtain the following Theorem.

\begin{theorem}[Main result]
\label{thm:main}
Under Assumption \ref{ass:main}, 
if $K(\delta, N)$, \textcolor{black}{$K_{-1}$} as in Lemma \ref{lemma:bound_M}, \textcolor{black}{$n_*$ as in Lemma \ref{lemma:bound_H}},
\textcolor{black}{$K_y,K_u$ from Lemma \ref{lem:stab:stat} and Assumption \ref{ass:main}},
and $N$ large enough for $N > 2(2n+1)$ and $K(\delta, N) \leq \frac{\sigma_n(H_{\alpha,\beta})}{2\textcolor{black}{|\mathcal{W}|}\sqrt{n}}$, then
\begin{align}
    \nonumber
    \mathbf{EstErr} = &\max\limits_{q \in Q}\{\|\bAhat_q - \bar{A}_q\|_F, 
    \|\bBhat_q - \bar{B}_q\|_F , 
    \|\bChat-\bar{C}\|_F \}\\  
    & \mathbb{P}\Big[ \mathbf{EstErr} < K_2 K(\delta, N) \Big] > 1-\delta
    \label{eq:main1} 
\end{align}
\[ \!\! K_2 =  
\textcolor{black}{\sqrt{|\mathcal{W}|}} \max\Big\{1, \frac{2\sigma_n(H_{\alpha,\beta})+
\textcolor{black}{n_*K_{-1}K_yK_u
4\sqrt{2}}}{\sigma_n^2(H_{\alpha,\beta})}\Big\} \!\!
\]

\end{theorem}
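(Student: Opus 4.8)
The plan is to condition on the high-probability event established in Lemma \ref{lemma:bound_M} and then reduce everything to the deterministic perturbation estimates of Lemma \ref{lemma:bound_H}. Let $E$ be the event that $\norm{\Mw - \hMw}_F \le K(\delta,N)$ holds simultaneously for all $w \in \mathcal{W}$; this is precisely the event whose complement is bounded by $\delta$ in the proof of Lemma \ref{lemma:bound_M} through the union bound over the input components $l \in [1,m]$ and over $\mathcal{W}$, so $\mathbb{P}(E) > 1-\delta$. All remaining arguments are carried out on $E$, where in particular $\sqrt{\sum_{w \in \mathcal{W}} \norm{\Mw - \hMw}_F^2} \le \sqrt{|\mathcal{W}|}\,K(\delta,N)$, and the hypothesis $K(\delta,N) \le \sigma_n(H_{\alpha,\beta})/(2|\mathcal{W}|\sqrt{n})$ guarantees that the smallness condition required for \eqref{eq:boundH:3} (and for the perturbed-inverse bound $\norm{\bHhat^{-1}}_F \le 2/\sigma_n(H)$ in \eqref{eq:boundH:1}) is met with $H = H_{\alpha,\beta}$.

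First I would treat $\bChat - \bar{C} = \bHhat_{\beta} - H_{\beta}$, which is immediate: by \eqref{eq:boundH:2} applied with $\kappa = \beta$ its Frobenius norm is at most $\sqrt{\sum_{w} \norm{\Mw - \hMw}_F^2} \le \sqrt{|\mathcal{W}|}\,K(\delta,N)$ on $E$. This accounts for the $1$ inside the $\max$ defining $K_2$. Next I would handle $\bAhat_q - \bar{A}_q$ and $\bBhat_q - \bar{B}_q$ via the standard product-and-inverse split. Writing
\[
\bAhat_q - \bar{A}_q = \bHhat_{\alpha,\beta}^{-1}\bigl(\bHhat_{\alpha,q,\beta} - H_{\alpha,q,\beta}\bigr) + \bigl(\bHhat_{\alpha,\beta}^{-1} - H_{\alpha,\beta}^{-1}\bigr)H_{\alpha,q,\beta},
\]
and applying the triangle inequality together with submultiplicativity of $\norm{\cdot}_F$, I would bound the four factors directly from Lemma \ref{lemma:bound_H}: $\norm{\bHhat_{\alpha,\beta}^{-1}}_F \le 2/\sigma_n(H_{\alpha,\beta})$ and $\norm{H_{\alpha,q,\beta}}_F \le n_* K_{-1} K_y K_u$ from \eqref{eq:boundH:1}, the numerator difference from \eqref{eq:boundH:2}, and the inverse difference from \eqref{eq:boundH:3}, the last two bounded on $E$ by $\sqrt{|\mathcal{W}|}\,K(\delta,N)$ times the appropriate $\sigma_n$-factor. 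Collecting terms yields
\[
\norm{\bAhat_q - \bar{A}_q}_F \le \sqrt{|\mathcal{W}|}\,K(\delta,N)\,\frac{2\sigma_n(H_{\alpha,\beta}) + 4\sqrt{2}\,n_* K_{-1}K_yK_u}{\sigma_n^2(H_{\alpha,\beta})},
\]
and the identical computation for $\bBhat_q - \bar{B}_q$ (using $\norm{H_{\alpha,q}}_F \le n_* K_{-1}K_yK_u$, valid since $n_* = \max\{n,m\}$ covers the $n \times m$ shape) gives the same bound. Taking the maximum over the three families of matrices produces exactly the factor $\max\{1,\, (2\sigma_n(H_{\alpha,\beta}) + 4\sqrt{2}\,n_*K_{-1}K_yK_u)/\sigma_n^2(H_{\alpha,\beta})\}$, so that $\mathbf{EstErr} \le K_2 K(\delta,N)$ holds on $E$; since $\mathbb{P}(E) > 1-\delta$, the claim \eqref{eq:main1} follows.

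The conceptually hard work has already been absorbed into the two lemmas, so the remaining obstacle is essentially careful bookkeeping. The two points requiring attention are, first, that the single high-probability event $E$ must simultaneously control every Markov parameter entering all of $\bHhat_{\alpha,\beta}, \bHhat_{\alpha,q,\beta}, \bHhat_{\alpha,q}, \bHhat_{\beta}$ (which is why $\mathcal{W}$ is defined to collect all of them and why the union bound in Lemma \ref{lemma:bound_M} is over the whole of $\mathcal{W}$); and second, that the smallness hypothesis on $K(\delta,N)$ is exactly what licenses both the $2/\sigma_n$ bound on $\bHhat_{\alpha,\beta}^{-1}$ and the inverse-difference estimate \eqref{eq:boundH:3}. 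The only genuinely delicate step is matching the collected coefficients precisely so that they coincide with the stated $K_2$ rather than a loose multiple of it.
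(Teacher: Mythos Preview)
Your proposal is correct and follows essentially the same argument as the paper: both condition on the high-probability event from Lemma~\ref{lemma:bound_M}, decompose $\bAhat_q-\bar{A}_q$ (and $\bBhat_q-\bar{B}_q$) as $\bHhat_{\alpha,\beta}^{-1}(\bHhat_2-H_2)+(\bHhat_{\alpha,\beta}^{-1}-H_{\alpha,\beta}^{-1})H_2$, and invoke the four bounds of Lemma~\ref{lemma:bound_H}, handling $\bChat-\bar{C}$ directly via \eqref{eq:boundH:2}. You are in fact more explicit than the paper in naming the simultaneous event $E$ (which the proof of Lemma~\ref{lemma:bound_M} indeed establishes via the union bound over $l$ and $\mathcal{W}$) and in verifying that the smallness hypothesis activates \eqref{eq:boundH:3} and the $2/\sigma_n$ inverse bound.
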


\begin{proof}
    The existence of $N$ is guaranteed by Lemma \ref{lemma:bound_M} and that $K(\delta, N)$ is strictly decreasing in $N$.

    First we prove \eqref{eq:main1} for $A_q$. For a fixed $\alpha, \beta$ and $q \in Q$ let $H_1 = H_{\alpha, \beta}$, $H_2 = H_{\alpha, q, \beta}$, $\bHhat_1$ = $\bHhat_{\alpha, \beta}$ and $\bHhat_2 = \bHhat_{\alpha, q, \beta}$.
    Due to the definitions and Lemma \ref{lemma:bound_H}, for $\varsigma = \sigma_n(H_1)$ 
    \begin{align*}
        &\|\bAhat_q - \bar{A}_q\|_F = \|\bHhat_1^{-1} \bHhat_2  - H_1^{-1} H_2\|_F \\
         \textcolor{black}{ \leq} &\|\bHhat_1^{-1}\|_F\|\bHhat_2 - H_2\|_F + \|H_2\|_F \|\bHhat_1^{-1} - H_1^{-1}\|_F \\
         \textcolor{black}{ \leq} &\textcolor{black}{K_2}\sqrt{\sum_{w \in \mathcal{W}}\|\Mw - \hMw\|_F^2}
    \end{align*}
    Therefore by Lemma \ref{lemma:bound_M} and basic properties of probability,
    \begin{align*}
        &\mathbb{P}[\|\bAhat_q - \bar{A}_q\|_F < K_2 K(\delta, N) ]  \\
       \textcolor{black}{\geq} &\mathbb{P}[\|\Mw - \hMw\|_F < K(\delta, N) ] > 1 - \delta
    \end{align*}
    For $B_q$ it is the same proof with $H_2 = H_{\alpha, q}$.
    For $C$ we have $\|\bChat - \bar{C}\|^2_F = \|\bHhat_\beta - H_\beta\|^2_F \leq \sum\limits_{w \in \mathcal{W}}\|\Mw - \hMw\|_F^2$ which is due to Lemma \ref{lemma:bound_H}.
\end{proof}

\textbf{Discussion.}
The term $K_2 K(\delta,N)$ is $O\left(\sqrt{\log(1/\delta)/N}\right)$, since $K_2$ is independent of $N$, matching standard PAC bounds \cite{alquier2012,shalev2014understanding}. Thus, as $N \to \infty$, the estimation error vanishes with probability $1 - \delta$, ensuring statistical consistency. The constant $K_0$ in $K(\delta,N)$ is
$O((1-\gamma)^{-3})$,
due to 
$K_{\theta}$ being 
$O((1-\gamma)^{-3})$,
therefore higher degree of  stability (smaller $\gamma$) 
yields smaller estimation error of Markov parameters. The term $K_{\theta}$ upper bounds a mixing coefficient. 
The bound is inversely proportional to the input covariance, which is consistent with persistence of excitation.
\textcolor{black}{Moreover, as $K(\delta,N)=O\bigl(p_*^{-(2n+1)}N^{-1/2}\bigr)$, for fixed $N$, small $p_*$ both inflates the bound and means too few samples visit the mode $q_*$ (as $\mathbb{P}(\bq(t)=q_*)=p_*$), so no algorithm can learn the system reliably. Allowing $N$ to grow arbitrarily large compensates for any small $p_*$, ensuring Ho–Kalman succeeds once enough visits to $q_*$ are collected.
}
Finally, 
$K_2$ and $K_0$ \textcolor{black}{increase polynomially} in the number of states,
discrete modes, and bounds on the input noise.
The condition $K(\delta,N) <\frac{\sigma_n(H_{\alpha,\beta})}{\sqrt{n}\textcolor{black}{|\mathcal{W}|}}$ ensures the estimated Hankel matrix $\bHhat_{\alpha,\beta}$ is invertible with high probability, enabling the Ho-Kalman algorithm, and it holds if
$N = \Omega(n^5\sigma^{-1}_n(H_{\alpha,\beta}))$. 
In summary, the bound exhibits expected behavior: increasing input variance, stability, and choosing good $\alpha$ and $\beta$ reduce sample complexity, while larger model size increases it. 
\textcolor{black}{Numerical experiments confirm 
 the above discussion
about the behavior of
the estimation error, 
see \cite[Appendix C]{racz2025finite}. }


\textbf{Comparison with related results.}
Our bound aligns with \cite{sarkar2019nonparametric}, 
for bounded inputs/noises, but we use a single time series with bounded signals and known system order. In contrast, \cite{sarkar2019nonparametric} does not assume system order knowledge, relying on $M$ independent time series and exponentially large Hankel matrices; their bound is $O(1/\sqrt{M})$.
Our bound is stronger than \cite{eringisRenyi}, as it depends on $\log(1/\delta)$ rather than $1/\delta$.

When applied to LTI systems ($n_Q=1$), our results yield a $O(1/\sqrt{N})$ bound, consistent with existing literature. 
\textcolor{black}{Unlike}  prior work, we do not require a stability dependent 
a lower bound on the number of Markov parameters, but we estimate at most 
$2n+1$ 
Markov parameters
independently of the degree of stability. 
This is achieved through a different proof technique that avoids linear regression and handling
sub-Gaussian matrices. 
In fact, applying LTI methods to LSSs raises challenges, such as difficulties in establishing the sub-Gaussianity of the regressor due to switching, and exponentially large matrices when estimating $n_Q^T$ Markov parameters.
\textcolor{black}{Note, that identifying LSSs in general cannot be achieved by parallel LTI identification of its subsystems, see Example 2 in \cite{petreczky2010identifiability}}.
\textcolor{black}{In contrast to \cite{MASSUCCI202155}, we assume that the switching is observed, which allows us to use the Ho-Kalman algorithm, for details see \cite[Appendix D]{racz2025finite}.}


 


\section{Conclusions and future work}
\label{sec:conc}
 We have presented a finite-sample $O(1/\sqrt{N})$ error bound for an \textcolor{black}{identification} algorithm for LSS which uses a single time series
 of length $N$. The bound exhibits the standard properties of finite-sample bounds. The proof is based on concentration inequalities for weakly dependent processes. 
 Future work is directed towards extending the results to more general switching scenarios 
 \textcolor{black}{and non i.i.d. unbounded inputs.}

\section{Remaining proofs}
\label{sec:proofs}

\textbf{Proof of Lemma \ref{lemma:bound_H}.}
   The first equation of \eqref{eq:boundH:1} follows from Assumption \ref{ass:main} and equation \eqref{eq:Mw}, while the second one follows from the proof of \eqref{eq:boundH:3}.
     \eqref{eq:boundH:2} follows from the definition of the Frobenius norm.
    As for \eqref{eq:boundH:3}, by \cite[Theorem 1.1]{MENG2010956} 
    \\$\|\widehat{H}^{-1} - H^{-1}\|_F \leq \sqrt{2} \max\{\|\widehat{H}^{-1}\|_2^2,  \|H^{-1}\|_2^2\}\norm{\widehat{H} - H}_F$
    From \cite[(5.3.9), page 265]{golub2013matrix} follows that 
    $\|\widehat{H}^{-1}\|_2 \le \frac{1}{\sigma_n(\widehat{H})}$ and
    $\|H\|^{-1}_2 \le \frac{1}{\sigma_n(H)}$.
    From \cite[Corollary 8.6.2]{golub2013matrix}
    it follows that
    $|\sigma_n(\widehat{H})-\sigma_n(H)| \le \|\widehat{H}-H\|_2 \le \sqrt{n} \|\widehat{H}-H\|_F$\\$ \textcolor{black}{\le}  \sqrt{n \sum_{w \in \mathcal{W}} \|\Mw - \widehat{M}_w\|_F^2}$.
     That is, if for all $w \in \mathcal{W}:$ 
     $\|M_w-\widehat{M}_w\|_F < \frac{0.5\sigma_n(H)}{|\mathcal{W}|\sqrt{n}}$, then
     $|\sigma_n(\widehat{H})-\sigma_n(H)|$\\$ \textcolor{black}{\le} 0.5\sigma_n(H)$ and hence 
     $\sigma_n(\widehat{H}) \ge 0.5\sigma_n(H)$, thus
    $\|\widehat{H}^{-1} - H^{-1}\|_F
     \le \frac{\sqrt{2}}{0.25\sigma^2_n(H)} 
     \|\widehat{H}-H\|_F$.
     
\textbf{Proof of Lemma \ref{lemma:ktheta}.}
We show that $\{\bX_t\}_{t \in \mathbb{Z}}$ is a Causal Bernoulli Shift \cite[Section 3.1.4]{alquier2012}.
Let $\mathcal{X}=\mathbb{R}^{m' + n_Q}$
viewed as a Banach space with \textcolor{black}{norm $\norm{\cdot}_{1}$}.
Let $H: \mathcal{X}^{\mathbb{N}} \rightarrow \mathbb{R}$ such that
there exist real numbers $\{a_j\}_{j=0}^{\infty}$ with the property that for any $v=\{v_j\}_{j=0}^{\infty}$,
$v'=\{v'_j\}_{j=0}^{\infty}$,
$v_j,v'_j \in \mathcal{X}$,
\begin{align}
    \abs{H(v) - H(v')} &\leq \sum\limits_{j=0}^{\infty}a_j \|v_j - v'_j\|_{1}, \quad 
    \sum\limits_{j=0}^\infty ja_j < \infty   
    \label{eq:cbs}
\end{align}
Then $\bX_t$ is a CBS, if for all $t \in \mathbb{Z}$,
    $\bX_t = H(\bxi_t, \bxi_{t-1}, \ldots)$,
where $\bxi_t$ are i.i.d.
Note, that for a fixed $l \in [1, m]$ 
\begin{align*}
    \bX_i &= \by(i)\bz_{w}^{\bu[l]}(i)  = \sum\limits_{v \in Q^{*}} M'_v \bz_{v}^\mathbf{r}(i)\bz_{w}^{\bu[l]}(i) 
\end{align*}
We omit the dependence of $\bX_i$ on $w$ and $i$ from the notation for simplicity.
Let $\bxi(t) = [\mathbf{r}^T(t), \bchi_1(t), \ldots, \bchi_{n_Q}(t)]^T \in \mathbb{R}^{m' + n_Q}$.
Let $\Xi = \{\xi_t\}_{t \in \mathbb{N}} \in (\mathbb{R}^{m'+n_Q})^{\mathbb{N}}$ and
for all $v \in Q^{*}$
\begin{align*}
g_v(\Xi)
=g'_v(\Xi)\prod_{i=1}^{|v|} \sat_{0,1}(\xi_{|v|-i+1}[v_i + m'])
\end{align*}
where $g'_v(\Xi) = \sat_{K_u}(\xi_{|v|}[1:m'])$ for $v \neq w$ and  $\sat_{K_u}(\xi_{|v|}[l])$ otherwise.
If $|v| > 0$ and $g_{\epsilon}(\Xi)=\sat_{K}(\xi_0[1:m'])$, where
$\sat_{a,b}(x) = \min\{\max\{x, a\}, b\}$ for $x\in\mathbb{R}$, $\sat_{a,b}(v)[i] = \sat_{a,b}(v[i])$ for any vector $v$
and $\sat_a(x)$\\$ \textcolor{black}{=} \sat_{-a, a}(x)$.
Then
    $\bX_i = \sum\limits_{v \in Q^*} M_v' g_v(\bXi) g_{w}(\bXi) 
    =: H^w(\bXi)$
for $\bXi = \{\textcolor{black}{\bxi(t)}\}_{t \in \mathbb{N}}$, where $H^w$ is defined as
\begin{equation}
\label{shift:eq1}
   H^w(\Xi)=
   \sum_{v \in Q^{*}} M_v' g_v(\Xi)g_w(\Xi)
\end{equation}
We show that $H^w$ satisfy
\eqref{eq:cbs} for a suitable $L_j$.

\begin{lemma}
   \label{lemma:finite1}
    Let \textcolor{black}{$Q^+ = Q^* \setminus \{\epsilon\}$}, 
    $K_{\geq j} = \sum\limits_{|v|\geq j} \|M'_v\|_2$, 
$K_{\infty} = \sum\limits_{v \in Q^+}\|M'_v\|_2$ and
        $K_j = K_u K_M$ if $j = 0$, and for $j \geq 1$: 
        $K_j = \sqrt{m'}K'_uK_{\geq j}$. Let \textcolor{black}{$\mathbf{1}$ be the indicator function},  \\
        \( L_j = K_j + \textcolor{black}{ \sqrt{m'}K_u
         \mathbf{1}_{\{j \le |w|\}}(K_{\infty}
         + (K_{\infty}+K_M)\mathbf{1}_{\{j = |w|\}})} \).
    Then \eqref{shift:eq1} is absolutely convergent, $K_\infty \leq \frac{K_M}{1 - \gamma}$, 
    and the process $\bX_i$ is a CBS with $H = H^w$ and $a_j = L_j$.
    \end{lemma}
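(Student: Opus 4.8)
The plan is to establish the three assertions in turn, all of them resting on the geometric decay $\norm{M'_v}_2 < K_M\gamma^{|v|}/n_Q^{|v|}$ from Lemma~\ref{lem:stab:stat}~a), together with the two elementary facts that every saturation map $\sat_{a,b}$ is $1$-Lipschitz and takes values in $[a,b]$.

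I would first settle the two quantitative bounds. Grouping the words by length and using that there are exactly $n_Q^k$ words of length $k$, Lemma~\ref{lem:stab:stat}~a) gives $K_{\geq j} = \sum_{|v|\geq j}\norm{M'_v}_2 < \sum_{k\geq j} n_Q^k K_M \gamma^k / n_Q^k = K_M\gamma^j/(1-\gamma)$; taking $j=1$ yields $K_\infty < K_M\gamma/(1-\gamma)\leq K_M/(1-\gamma)$, which is the claimed bound, and the general tail estimate $K_{\geq j}< K_M\gamma^j/(1-\gamma)$ will be reused below. Absolute convergence of \eqref{shift:eq1} then follows at once: since each $\sat_{0,1}$-factor lies in $[0,1]$ and each entry of $g'_v$ is bounded by $K_u$, we have $\norm{g_v(\Xi)}_2\leq \sqrt{m'}K_u$ and $|g_w(\Xi)|\leq K_u$, so $\sum_{v\in Q^*}|M'_v g_v(\Xi)g_w(\Xi)|\leq \sqrt{m'}K_u^2\sum_{v\in Q^*}\norm{M'_v}_2<\infty$.

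The heart of the argument is the Causal-Bernoulli-Shift inequality \eqref{eq:cbs} with $a_j=L_j$. I would write $H^w=\bigl(\sum_{v\in Q^*}M'_v g_v\bigr)g_w$ and, for two innovation sequences $\Xi,\Xi'$, expand each summand via the product rule $M'_v g_v(\Xi)g_w(\Xi)-M'_v g_v(\Xi')g_w(\Xi')=M'_v\bigl(g_v(\Xi)-g_v(\Xi')\bigr)g_w(\Xi)+M'_v g_v(\Xi')\bigl(g_w(\Xi)-g_w(\Xi')\bigr)$. Since each $g_v$ and each $g_w$ is a product of $1$-Lipschitz, bounded saturation factors, a telescoping product-rule estimate bounds $\norm{g_v(\Xi)-g_v(\Xi')}_2$ and $|g_w(\Xi)-g_w(\Xi')|$ by sums $\sum_j(\cdots)\norm{\xi_j-\xi'_j}_1$. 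The first group of terms, where $\xi_j$ is perturbed inside the factors $g_v$, involves only words with $|v|\geq j$ (as $g_v$ depends on $\xi_1,\dots,\xi_{|v|}$, and on $\xi_0$ when $v=\epsilon$); weighting by $\norm{M'_v}_2$ and by the bound $K_u$ on the frozen $g_w$, and summing, produces exactly the coefficient $K_j=\sqrt{m'}K'_u K_{\geq j}$ (with the degenerate case $K_0=K_uK_M$ coming from the single $1$-Lipschitz factor $g_\epsilon$). The second group, where $\xi_j$ is perturbed inside the common factor $g_w$, is nonzero only for $j\leq|w|$ --- hence the indicator $\mathbf 1_{\{j\leq |w|\}}$ --- and is weighted by the bound $\sqrt{m'}K_u\sum_v\norm{M'_v}_2$ on the frozen output sum; the additional $\mathbf 1_{\{j=|w|\}}$ term records that at $j=|w|$ the factor $g_w$ depends on $\xi_{|w|}$ both through a switching indicator and through its input entry $g'_w=\sat_{K_u}(\xi_{|w|}[l])$. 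Collecting these contributions and bounding $\sum_v\norm{M'_v}_2\leq K_\infty+K_M$ yields precisely $L_j$.

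Finally I would verify the summability $\sum_{j\geq0}jL_j<\infty$ required for a CBS. The terms carrying the indicators are supported on the finite set $\{0,\dots,|w|\}$ and are individually finite, hence contribute a finite amount; for the remaining part, the geometric tail bound gives $\sum_{j}jK_j=\sqrt{m'}K'_u\sum_j jK_{\geq j}\leq \sqrt{m'}K'_u\frac{K_M}{1-\gamma}\sum_j j\gamma^j<\infty$. Thus $\{\bX_i\}$ is a Causal Bernoulli Shift with $H=H^w$ and $a_j=L_j$, as claimed. I expect the main obstacle to be the product-rule bookkeeping in the third step: for each coordinate $j$ one must correctly identify which words $v$ and which sub-factors of $g_v$ and $g_w$ genuinely depend on $\xi_j$, and then check that the product-rule coefficients collapse exactly into $L_j$. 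The mild edge effects at $j=0$ and at $j=|w|$ --- where the input entries enter --- are the places where the constants are easiest to misaccount.
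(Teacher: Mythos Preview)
Your plan follows essentially the same route as the paper: the geometric decay $\norm{M'_v}_2<K_M\gamma^{|v|}/n_Q^{|v|}$ from Lemma~\ref{lem:stab:stat}, a product-rule/telescoping bound on $H^w(\Xi)-H^w(\Xi')$, and a geometric-series estimate for $\sum_j ja_j$. The absolute convergence of~\eqref{shift:eq1}, the bound $K_\infty\le K_M/(1-\gamma)$, and the summability argument are exactly as in the paper.

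The one genuine difference is in the order of the product-rule split, and here your claim to recover the constants ``exactly'' is too optimistic. You decompose
\(g_v(\Xi)g_w(\Xi)-g_v(\Xi')g_w(\Xi')=(g_v-g_v')\,g_w+g_v'(g_w-g_w')\),
separating the $v$-block from the $w$-block. The paper instead writes $g_vg_w=\bar u_{|v|}\,s\,\chi_v\chi_w$ and first varies the two \emph{input} factors $\bar u_{|v|},s$ while freezing both indicator products $\chi_v,\chi_w$, and only then varies the indicators. The stated constants $L_j$ are tailored to that grouping: $K_j$ collects the $\bar u_{|v|}$- and $\chi_v$-contributions, the term $\sqrt{m'}K_uK_\infty\mathbf 1_{\{j\le|w|\}}$ comes from $\chi_w$, and the extra $\mathbf 1_{\{j=|w|\}}$-piece comes from $s$. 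With your $(g_v,g_w)$-split the roles mix: in your ``second group'' the frozen factor is $\sum_v M'_v g_v(\Xi')$, which is bounded by $\sqrt{m'}K_u(K_\infty+K_M)$, and the telescoping of $g_w=s\chi_w$ picks up a further factor $K_u$ from $|s'|$ on the switching-indicator pieces, so the coefficient at levels $1\le j<|w|$ comes out as $\sqrt{m'}K_u^2(K_\infty+K_M)$ rather than $\sqrt{m'}K_uK_\infty$. Thus your grouping produces Lipschitz coefficients of the same geometric order (so the CBS structure and $\sum_j ja_j<\infty$ hold just as well), but to land on the \emph{literal} $L_j$ of the lemma you need the paper's input-then-indicator ordering. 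Your own closing remark about the bookkeeping at $j=0$ and $j=|w|$ being ``the places where the constants are easiest to misaccount'' applies to the whole second group.
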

\begin{proof}
Without loss of generality, we can assume that $q_1 \ldots q_k = w = q'_1 \ldots q'_k$.
Note that for any
$\Xi \in (\mathbb{R}^{m'+n_Q})^\mathbb{N}$
and $s \in Q^{*}$
$\norm{g_s(\Xi)}_\infty \le K_u$, hence
$\abs{M_v' g_v(\Xi)g_w(\Xi)}$\\$ \textcolor{black}{\le} \sqrt{m'}\norm{M'_v}_2 K_u^2$.
From Lemma \ref{lem:stab:stat}
it follows that $\norm{M'_v}_2 \le K_M \frac{\lambda^|v|}{n_Q^{|v|}}$, implying that \eqref{shift:eq1}
can be upper bounded by a convergent geometric series, thus it is convergent. 

\textbf{Step 1: First part of  (\ref{eq:cbs})}.
    For $\Xi, \Xi' \in (\mathbb{R}^{m' + n_Q})^\mathbb{N}$
    we can upper bound $|H^{w}(\Xi) - H^{w}(\Xi')|$ by
\begin{align*}
\!\! \sum\limits_{v\in Q^*}\!\! \norm{M'_v}_2\norm{g_v(\Xi )g_w(\Xi) 
- g_v(\Xi') g_w(\Xi')}_2 
\end{align*}
For the sake of readibility, let
\begin{align*}
    &\Gamma^{v,w}(\Xi, \Xi') =  g_v(\Xi)g_w(\Xi)
    - g_v(\Xi') g_w(\Xi') \\
    &\chi'_{c}(j) := \sat_{0,1}(\xi'_{j}[c+m']), \chi_{c}(j) := \sat_{0,1}(\xi_{j}[c+m']) \\
    &\chi'_\upsilon =
    \prod_{i=1}^{k} \chi'_{\upsilon_i}(k-i), \quad \chi_\upsilon =
    \prod_{i=1}^{k} \chi_{\upsilon_i}(k - i) 
\end{align*}
for any $\upsilon=\upsilon_1\cdots \upsilon_k \in Q^{+}$,
$\upsilon_i \in Q$, $i\in [1,k]$.
\\Let $\bar{u}_k=\sat_{K_u}(
\xi_k[1:m'])$, 
$\bar{u}'_k=\sat_{K_u}(\xi_k'[1:m'])$,
\\$s=\sat_{K_u}(\xi_{\textcolor{black}{|w|}}[l])$ and $s'=\sat_{K_u}(\xi'_{\textcolor{black}{|w|}}[l])$ for all $k$. Then
\begin{align*}
    &\|\Gamma^{v,w}(\Xi, \Xi')\|_2 =
     \|\bar{u}_{\textcolor{black}{|v|}} s \chi_{v}\chi_{w}- 
      \bar{u}'_{\textcolor{black}{|v|}}s'
\chi'_v\chi'_w\|_2.
\end{align*}
This yields to
\begin{align*}
     \|\Gamma^{v,w}(\Xi, \Xi')\|_2 &\leq 
     \|\bar{u}_{\textcolor{black}{|v|}} s \chi_{v}\chi_{w}-
    \bar{u}_{\textcolor{black}{|v|}}^{'} s'\chi_{v}\chi_{w}\|_2  \\
    & +\|\bar{u}_{\textcolor{black}{|v|}}^{'} s' \chi_{v}\chi_{w}-
    \bar{u}'_{\textcolor{black}{|v|}}s'\chi'_{v}\chi'_{w}\|_2
\end{align*}
Now 
$\norm{\sat_{K_u}(x)}_2 \leq \sqrt{m'}K_u$ and $\norm{\sat_{0,1}(x)}_2 \leq \sqrt{m'}$. 
\textcolor{black}{As} $|\chi_w\chi_v| \le 1$
and $\max\{\norm{\bar{u}_k}_2, \norm{\bar{u}'_k}_2\} \le \sqrt{m'}K_u$,
thus
\begin{align*}
& \|\bar{u}_{\textcolor{black}{|v|}} s \chi_{v}\chi_{w}-
    \bar{u}'_{\textcolor{black}{|v|}}s'\chi_{v}\chi_{w}\|_2  \\
 & \textcolor{black}{\leq} K_u \|\bar{u}_{|v|}  - \bar{u}'_{|v|}\|_1 + \sqrt{m'}K_u|s  - s'|
\end{align*}
Moreover, notice that
\begin{align*}
& \|\bar{u}'_{\textcolor{black}{|v|}} s' \chi_{v}\chi_{w}-
    \bar{u}'_{\textcolor{black}{|v|}}s'\chi_{v}'\chi_{w}'\|_2
\textcolor{black}{\le}  \|\bar{u}'_{\textcolor{black}{|v|}} s'\|_2
|\chi_{v}\chi_{w}-
\chi_{v}'\chi_{w}'|\!\!   \\
& \leq  \textcolor{black}{\sqrt{m'}K_u^2(
|\chi_{v}-\chi_v^{'}|+|\chi_{w}-\chi_w^{'}|)}  \leq  \textcolor{black}{\sqrt{m'}K_u^2\cdot(} \\
&\textcolor{black}{ \sum_{j=1}^{|v|}\sum_{c \in Q}
 |\chi_c(j)-\chi'_c(j)|  + \sum_{j=1}^{|w|}\sum_{c \in Q}
 |\chi_c(j)-\chi'_c(j)|)}
\end{align*}
As $\norm{\bar{u}_k -\bar{u}'_k}_1 \le \norm{u_k - u'_k}_1= \norm{\xi_k[1:m']-\xi'_k[1:m']}_1$,
$\abs{\chi_c(j)-\chi^{'}_c(j)} \le \abs{\xi_{j}[c+m']-\xi'_{j}[c+m']}$ for all
\\$1 \leq j \leq |v| + |w|$, because $\sat_{a,b}$ is $1$-Lipschitz, implying
\begin{align*}
    &\|\Gamma^{v,w}(\Xi, \Xi')\|_2  \leq K_u\|\xi_{\textcolor{black}{|v|}}[1:m']  - \xi'_{\textcolor{black}{|v|}}[1:m']\|_2 \\
    &+ \sqrt{m'}K_u|\xi_{\textcolor{black}{|w|}}[l]  - \xi'_{\textcolor{black}{|w|}}[l]| \\
    &+ \sqrt{m'}K_u^2 \textcolor{black}{\sum\limits_{j=1}^{\max\{|v|,|w|\}} c_j \sum\limits_{c \in Q}|\xi_{j}[c+m'] - \xi'_{j}[c+m']|}
\end{align*}
\textcolor{black}{where $c_j=2$ for $j \le |w|$ and $c_j=1$ for $j > |w|$.}
Note that 
\begin{align*}
    \textcolor{black}{\|\Gamma^{\epsilon, w}(\Xi, \Xi')\|_2} &\leq \textcolor{black}{K_u} \|\xi_0[1:m'] - \xi'_0[1:m']\|_2 \\
    &+ \sqrt{m'}K_u |\xi_{\textcolor{black}{|w|}}[l] - \xi'_{\textcolor{black}{|w|}}[l]|
\end{align*}
We have
\begin{align*}
&|H^w(\Xi) - H^w(\Xi')| \leq 
\sum\limits_{v \in Q^*} \|M'_v\|_2 \|\Gamma^{v,w}(\Xi, \Xi')\|_2  \\
& \textcolor{black}{\leq} \sum\limits_{j=\textcolor{black}{1}}^\infty \Big[ \big[\|\xi_{\textcolor{black}{j}}[1:m'] - \xi'_{\textcolor{black}{j}}[1:m']\|_2 K_u \sum\limits_{|v| = j}\|M'_v\|_2  \\
 &+ K_u\sum\limits_{c \in Q} |\chi_c(j) - \chi'_c(j)|\big] (\sqrt{m'}K_u (\sum\limits_{|v| \geq j} \|M'_v\|_2 \\
 & + \textcolor{black}{\chi(j \le |w|) 
 \sum_{v \in Q^{+}}  \norm{M'_v}_2)}
 \Big ] \\
 &+ |\xi_{\textcolor{black}{|w|}}[l] - \xi'_{\textcolor{black}{|w|}}[l]|\sqrt{m'}K_u \Big( \textcolor{black}{\norm{M'_\epsilon}_2 +} \sum\limits_{v \in \textcolor{black}{Q^{+}}} \norm{M'_v}_2 \Big) 
\end{align*}

\textcolor{black}{By using that by Lemma \ref{lem:stab:stat} $\norm{M'_\epsilon}_2 \leq K_M$ and}
the definition of $L_j$ and by $\|\cdot\|_2 \leq \|\cdot\|_1$ we have
$|H^w(\Xi) - H^w(\Xi')| \leq
  \sum\limits_{j=0}^\infty L_j \|\xi_{j} - \xi'_{j}\|_1$.

\textbf{Step 2: Second part of (\ref{eq:cbs})}.
Note, that $K_{\geq j} \leq K_\infty$ and  
\begin{align*}
    K_\infty = \sum_{j=0}^\infty \sum_{|v|=j}\|M'_v\|_2 \leq \sum\limits_{j=0}^\infty K_M n_Q^j \frac{\gamma^j}{n_Q^j} = \frac{K_M}{1 - \gamma}
    \end{align*}
by Lemma \ref{lem:stab:stat}.
As for the second equation in  \eqref{eq:cbs},
\begin{align}
    & \sum\limits_{j=0}^{\infty} j L_j = \sum\limits_{\textcolor{black}{j=1}}^\infty \big[ j \sqrt{m'}K_u \sum\limits_{|v| \geq j} \|M'_v\|_2 \big] \nonumber \\
    &+ 
    \sqrt{m'}K_u (
    \textcolor{black}{\sum_{j=1}^{|w|} j K_{\infty}}
    +\textcolor{black}{|w|}(K_\infty + K_M))
    \label{sumjaj:1}
\end{align}
%
Again, due to Lemma \ref{lem:stab:stat}
and 
\textcolor{black}{the definition of $K_{\geq j},$
}
\begin{align}
    &\sum\limits_{\textcolor{black}{j=1}}^\infty j K_{\textcolor{black}{\geq j}} 
    \leq \sum\limits_{j=1}^\infty j K_M \sum\limits_{k=j}^\infty n_Q^k \frac{\gamma^k}{n_Q^k}
   =\frac{K_M \gamma}{(1-\gamma)^3} 
    \label{sumjaj:2}
\end{align}
\end{proof}
\begin{proof}[Proof of Lemma \ref{lemma:ktheta}]
From \cite[Proposition 4.2]{alquier2012} 
it follows that 
that for a CBS, 
if $\norm{\bxi_0}_1 \leq b$, then $\theta_{\infty, N}(1) \leq 2b \sum\limits_{j=0}^\infty ja_j$.
	From the proof of Lemma \ref{lemma:finite1} we have that $\norm{\bxi_0}_1 \leq \textcolor{black}{m'K_u + n_Q}$.
Then the statement follows
from \eqref{sumjaj:1} and
\eqref{sumjaj:2} and
the definition of $K_{\theta}$,
and using $|w| \leq 2n + 1$ due to the definition of $\mathcal{W}$.
\end{proof}

\bibliographystyle{plain}
\bibliography{references_final}

%

\clearpage
\onecolumn
\appendix

\subsection{Proof of Lemma \ref{lem:stab:stat}}

Let $P$ and $\gamma$ \textcolor{black}{be} as in Assumption \ref{ass:main}
and $K_M=\max\{\|C\|_2 \max_{q \in Q} \|B_q'\|_2 \frac{n_Q}{\gamma} \textcolor{black}{\sqrt{\frac{\lambda_1(P)}{\lambda_n(P)}}}, \|\textcolor{black}{[D,1]}\|_2\}$.

Then $A_w^TPA_w \preceq \textcolor{black}{\frac{\gamma^{2|w|}}{n_Q^{2|w|}}} P$ \textcolor{black}{holds and implies} $\|A_w\|_2 \le \frac{\gamma^{|w|}}{n_Q^{|w|}} \textcolor{black}{\sqrt{\frac{\lambda_{1}(P)}{\lambda_{n}(P)}}}$.
In particular, \textcolor{black}{it follows that}
$\|M'_{iv}\|_2 \le \|CA_vB'_i\|_2$\\$\le \|C\|_2 \|A_v\|_2 \|B_i'\|_2 \le \|C\|_2 \|B_i'\| \frac{\textcolor{black}{\gamma}^{|v|}}{n_Q^{|v|}} \\\textcolor{black}{\sqrt{\frac{\textcolor{black}{\lambda_{1}}(P)}{\lambda_{n}(P)}}} \le K_M \frac{\gamma^{|v|+1}}{n_Q^{|v|+1}}$, \textcolor{black}{where we used the submultiplicativity of the induced matrix norm. }This yields statement a).

For b) note, that $\sum_{q \in Q} p_q A_q^TPA_q \preceq \sum_{q \in Q} A_q^TPA_q \preceq \textcolor{black}{\gamma^2} P$ \textcolor{black}{due to Assumption \ref{ass:main}}. Then \cite[Theorem 3.9]{CostaBook} implies
 $\sum_{q \in Q} p_q A_q \otimes A_q$
is Schur. From \cite[Lemma 1]{PetreczkyBilinear} and \cite[Example 2]{PetreczkyBilinear} it follows that
there exists a unique process
$\bx$ and $\by$ such that
\eqref{eq:M} holds and the infinite sums converge in the mean-square sense.
Stationarity of
$\bx_t$ follows from
it being a mean-squared limit
$\mathbf{S}_I(t)=\!\!\! \!\! \sum\limits_{v \in Q^{*}, |v| < I,q \in Q}\!\!\!\! A_vB'_q \bz_{qv}^{\mathbf{r}}(t)$
as $I \rightarrow \infty$. As mean square convergence implies convergence in distribution, it follows
that the joint distribution of $\bx_{t_1+t},\ldots,\bx_{t_k+t}$ is the limit of the joint distribution of
$\mathbf{S}_I(t_1+t),\ldots,\mathbf{S}_I(t_k+t)$ as $I \rightarrow \infty$,
and the latter distribution function does not depend on $t$, as $\bz_{qv}^{\mathbf{r}}(t)$ is stationary, and hence its linear combination is stationary, too.
If $\bx$ is stationary, then so is $\by$ which arises as a linear combination of $\bx$ and $\mathbf{r}$.
Finally, \eqref{eq:Mw} follows from
\eqref{eq:M} by the equality
$\mathbb{E}[\by(t)(\bz_w^{\bu}(t))^T]=\sum_{v \in Q^{*}} M'_v \mathbb{E}[\bz_{qv}^{\mathbf{r}}(t)(\bz_w^{\bu}(t))^T]$ \textcolor{black}{due to the linearity of the expectation}, by the fact that
$\mathbb{E}[\bz_v^{\mathbf{r}}(t)(\bz_w^{\bu}(t))^T]=0$ as long as $v \ne w$ \textcolor{black}{due to independence},
and by
$\mathbb{E}[\bz_w^{\mathbf{r}}(t)(\bz_w^{\bu}(t))^T]=[\Sigma_u^T p_w, 0]^T$ \textcolor{black}{due to Assumption \ref{ass:main}}. This concludes b).

As for c), $\norm{\bz_v^{\mathbf{r}}(t)}_{\infty} \le 
\norm{\mathbf{r}(t-|v|)}_{\infty} |\bchi_v(t)| \le K_u$ \textcolor{black}{by the definition of $\mathbf{z}$ and $\bchi$} as
$|\bchi_v(t)| \le 1$. Then $|\by(t)| \le \sum_{v \in Q^{*}} \|M'_v| \|_2 \|\bz_v^{\bu}(t)|_2 \le \sum_{k=0}^{\infty} \sum_{|v|=k} K_M m' K_u \frac{\gamma^k}{n_Q^k}=K_y$ concludes c).

\begin{color}{black}
\subsection{Multi-output case}
The extension for the multi-output case is based on the same technique as the proof of Lemma 3.1. Namely, in the aforementioned proof, we first prove eq. (10) for a fixed input index $l$ and a fixed $w \in \mathcal{W}$ and take the union bound over the set of input indices and then over the set of nice selections $\mathcal{W}$. The exact same technique can be applied to the case of multiple outputs.

Consider an output dimension $r$. Now the Markov parameters are matrix valued and in the nice selection (Section II, par. Reduced basis Ho-Kalman algorithm), $\alpha$ has to be indexed by the output dimensions as well, i.e. $\alpha \subseteq Q^* \times \{1,\ldots, r\}$, see e.g. \cite{MertBastug:TAC}. Then, for a fixed $w \in \mathcal{W}$, fixed input dimension $l$ and fixed output dimension $k$, eq. (10) has the form
\begin{align*}
\label{eq:wl}
    \mathbb{P}\big[|\tMw[k,l] - \htMw[k,l]| > \widetilde{K}(\delta, N)\big] \leq
    \delta r^{-1}m ^{-1}|\mathcal{W}|^{-1}.
\end{align*}

The rest of the proof of Lemma 3.1. works for this modified version of eq. (10) and we can analogously take the union bound over the set of output indices as well. The rest of the Lemmas and Theorems that build on Lemma 3.1. hold without any additional modification (except that the constants change slightly).

\end{color}

\begin{color}{black}
\subsection{Numerical example}
	In order to illustrate the conclusions of the bound, we simulated an LSS of the form \eqref{eq:1} with $Q=\{1,2\}$,
	$\mathbf{q}(t)$ is an i.i.d. processes with $\mathbb{P}[\mathbf{q}(t)=q]=1/2$ for any $q=1,2$, with the input
	$\bu,\bw,\bv$ are mutually independent white noise processes sampled from the uniform distribution on $[-K_u,K_u]$.
	Furthermore, $A_1=\begin{bmatrix} 0 & 1 \\ 0 & a \end{bmatrix}$, $A_2=\begin{bmatrix} 0 & 0 \\ 0 & a \end{bmatrix}$,
	$B_1=\begin{bmatrix} 0 & 0 \end{bmatrix}$, $B_2=\begin{bmatrix} 0 & 1 \end{bmatrix}$, $C=\begin{bmatrix} 1 & 0 \end{bmatrix}$,
	$D=0$. 
	It is clear that the LSS above is minimal, i.e., it is reachable and observable, for $\bw=0$ and $\bv=0$. 
	Indeed, the matrices $R=\begin{bmatrix} B_2 & A_1B_2 \end{bmatrix}$, $O=\begin{bmatrix} C \\ CA_1 \end{bmatrix}$
	are full row and column rank respectively, and they are submatrices of the $2$-step extended reachability
	and observability matrices of the LSS respectively. Hence by \cite{Petreczky2015} the LSS is reachable, observable and
	minimal.

	Note that the transfer functions of the LTI subsystems associated with both discrete modes are zero and hence do not depend on $a$. 
	Hence, it is impossible to identify the parameter $a$ by simply running a classical LTI system identification scheme 
	in parallel for both LTI systems. At the same time, the knowledge of the Markov-parameter $CA_1A_2B_2=a$ allows us to
	identify $a$. This Markov-parameter corresponds to the output at time $t=3$ for input $\bu$ such that $\bu(0)=1$, $\bu(t)=0$, $t >0$,
	and switching signal $\mathbf{q}(0)=2$, $\mathbf{q}(1)=2$, $\mathbf{q}(2)=1$ and no noise. 

	In particular, for $\alpha=\{\epsilon,1\}$ and $\beta=\{(\epsilon,2,1),(1,2,1)\}$ the Hankel matrix 
	$H_{\alpha,\beta}=\begin{bmatrix} C_1B_2 & C_1A_1B_2 \\ C_1A_1B_2 & C_1A_1A_1B_2 \end{bmatrix}=\begin{bmatrix} 0 & 1 \\ 1 & a \end{bmatrix}$ is invertible.

	For $\gamma \in \{0.1,0.4,0.6\}$, we set $a=0.9\gamma/2$. By solving the LMI $A_q^TPA_a-\gamma^2/4P < 0$, $q=1,2$, $P > 0$, 
	we verified numerically that the LSS satisfies Assumption 2.1, e). 
	We sampled $\bw,\bv$ from the uniform distribution $[-K_u,K_u]$ and we sampled $\bu$ from the uniform 
	distribution $[-K_{u,inp},K_{u,inp}]$ for $K_u \in \{1,20,30\}$ and $K_{u,inp} \in \{0.5,0.8\}$. 
	We used the simulated output as data for the Ho-Kalman-based identification algorithm. 
	We computed the maximal estimation error  of the matrices  $\mathbf{EstErr}$ from Theorem \ref{thm:main}, 
	i.e., the maximum of the Frobenius norm of 
	the difference between matrices $\bAhat_q,\bBhat_q, \bChat$ and the matrices $\bar{A}_q,\bar{B}_q,\bar{C}$,
	where the latter matrices are the result of applying the the Ho-Kalman realization algorithm to the true Hankel-matrices
	$H_{\alpha,\beta}$, $H_{\alpha,q,beta}$, $H_{\alpha,q}$, $H_{\beta}$, $q \in Q$. 

	The results for various choices of $\gamma$, $K_u$ and $K_{u,inp}$ are presented in Fig. \ref{fig1}, \ref{fig2} and \ref{fig3}.
	The estimation error  $\mathbf{EstErr}$ behaves as predicted by the bound of Theorem \ref{thm:main}. 
	That is, the estimation error increases with $\gamma$ and $K_u$ and decreases with the variance $4K_{u,inp}^2/12$ of $\bu$.
    Note that the effect of the Lyapunov exponent $\gamma$ is relatively minor on the estimation error: this is consistent with the behavior of the constant $K_0$, which 
    converges to a constant as $\gamma \rightarrow 0$.

	\begin{figure}[ht!]
	\caption{Effect of stability on the parameter estimation error \label{fig1}}
	\centering
	\includegraphics[scale=0.8]{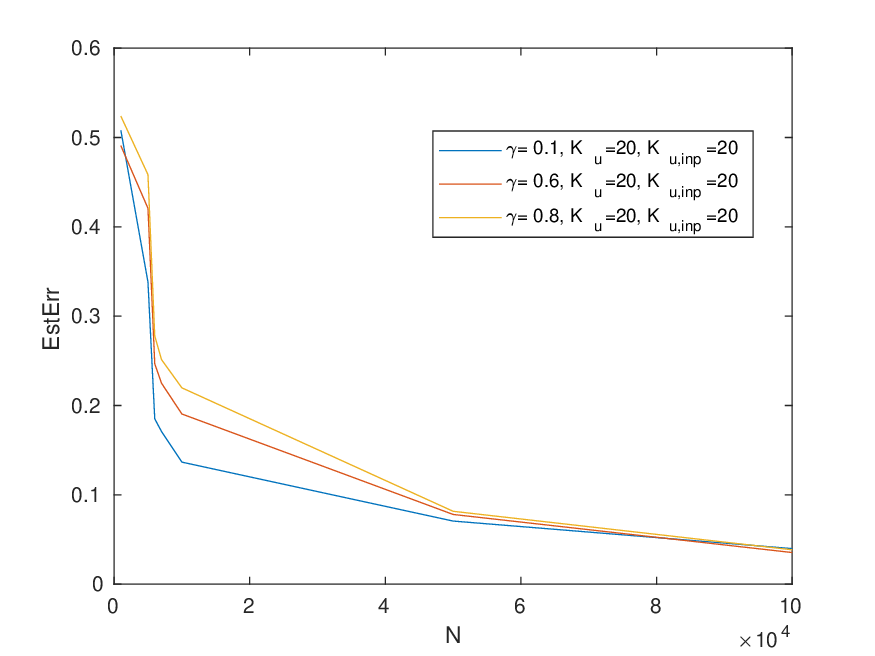}
        \end{figure}
    \vspace{-10pt}
	\begin{figure}[ht!]
		\caption{Effect of the variance $4K_{u,inp}^2/12$ of $u$ on the estimation error: smaller values of $K_{u,inp}$ correspond to smaller variance. \label{fig2}}
	\begin{center}	\includegraphics[scale=0.8]{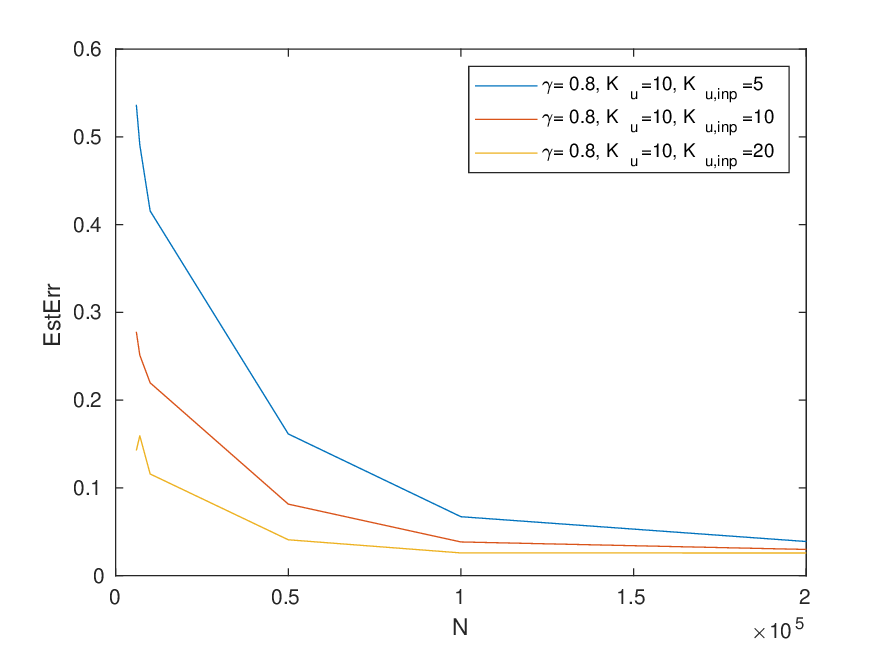}
	\end{center}
	\end{figure}
\begin{figure}[H]
		\caption{Effect of the magnitude $K_u$ of the input and noise  on the estimation error \label{fig3}}
	\begin{center}	\includegraphics[scale=0.8]{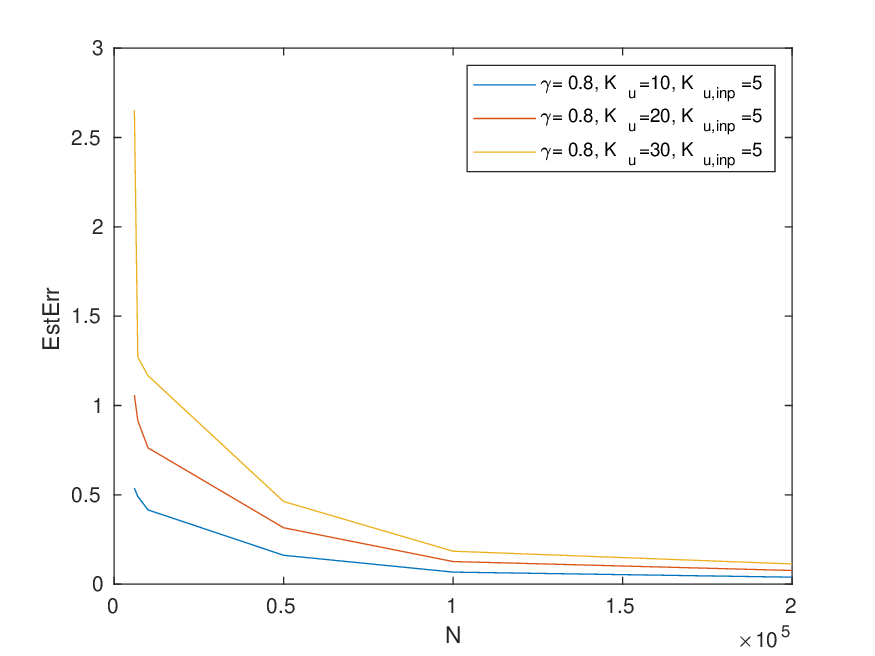}
	\end{center}
	\end{figure}
\end{color}	

The implementation of the above example is available at \href{https://gitlab.com/mpetrec/pac-lss}{https://gitlab.com/mpetrec/pac-lss}.

\begin{color}{black}
    \subsection{Effect of observed switching signal on Theorem \ref{thm:main}}

    As we stated in Section \ref{sec:main}, the bounding term in Theorem \ref{thm:main} behaves as expected, namely increasing input variance, stability and choosing a good selection of $\alpha$ and $\beta$ reduce the sample complexity, while larger model size increases it.

    As for the effect of known switching signals on Theorem \ref{thm:main}, to the best of our knowledge, there is no equivalent of the Ho-Kalman realization theory in the literature for switched systems in state-space form with unobserved switching. Consequently, it is unclear how to derive a Ho-Kalman based identification algorithm for such systems and therefore the effect of known switching compared to unknown switching is also unclear to analyze in a general manner.

    Furthermore, the existing work on switched systems with unobserved switching usually assume that the system is in an autoregressive form, i.e. its state can be observed. An example for this is in \cite{MASSUCCI202155}, where an algorithm agnostic PAC bound was derived depending on certain $\beta$-mixing coefficients which are usually difficult to estimate. What is more, the authors introduced an additional assumption and proved \cite[Theorem 2]{MASSUCCI202155} in order to "\textit{bypass the issue of estimating the mixing coefficient}" (see the discussion after \cite[Theorem 2]{MASSUCCI202155}).
    
    In contrast, our bound depends on different mixing coefficients $\theta$ arising from the theory of weakly dependent processes. As opposed to the $\beta$-mixing coefficient, the mixing coefficient $\theta$ can be estimated under some mild assumptions on the true, unknown system. Specifically, estimating $\theta$ requires an upper bound on the largest Lyapunov exponent of the true system.
    
More concretely, Lemma \ref{lemma:ktheta} estimates $\theta$ by showing that the Markov parameters can be generated by an underlying Bernoulli shift, for which the known switching signals are essential. Note, that changing Assumption \ref{ass:main} to Markovian switching could already raise difficulties when it comes to establish the Bernoulli shift structure. Estimating similar mixing coefficients in the unobserved case and the general effect of known switching signals on these estimations requires future research.
    
\end{color}
\end{document}